\newtheorem{lem}{Lemma}
\newtheorem{theorem}{Theorem}
\newtheorem{corollary}{Corollary}
\newcommand{\beq}{\begin{equation}}
\newcommand{\eeq}{\end{equation}}
\newcommand{\bea}{\begin{eqnarray}}
\newcommand{\eea}{\end{eqnarray}}
\newcommand{\bef}{\begin{figure}}
	\newcommand{\eef}{\end{figure}}
\newcommand{\bsc}{\begin{scriptstyle}}
	\newcommand{\esc}{\end{scriptstyle}}
\newcommand{\bd}{\begin{displaymath}}
\newcommand{\ed}{\end{displaymath}}
\begin{document}


\title{Action-Manipulation Attacks Against Stochastic Bandits: Attacks and Defense }
\author{Guanlin Liu and Lifeng Lai\thanks{G. Liu and L. Lai are with Department of Electrical and Computer Engineering, University of California, Davis, CA, 95616. Email: \{glnliu,lflai\}@ucdavis.edu. The work of G. Liu and L. Lai was supported by National Science Foundation under Grants CCF-1717943, ECCS-1711468, CNS-1824553 and CCF-1908258. This paper will be presented in part in the 2020 IEEE International Conference on Acoustics, Speech and Signal Processing~\cite{Icasspour}.}}
\date{\today}
\maketitle
\pagestyle{plain}
\begin{abstract}
Due to the broad range of applications of stochastic multi-armed bandit model, understanding the effects of adversarial attacks and designing bandit algorithms robust to attacks are essential for the safe applications of this model. In this paper, we introduce a new class of attack named action-manipulation attack. In this attack, an adversary can change the action signal selected by the user. We show that without knowledge of mean rewards of arms, our proposed attack can manipulate Upper Confidence Bound (UCB) algorithm, a widely used bandit algorithm, into pulling a target arm very frequently by spending only logarithmic cost. To defend against this class of attacks, we introduce a novel algorithm that is robust to action-manipulation attacks when an upper bound for the total attack cost is given. We prove that our algorithm has a pseudo-regret upper bounded by $\mathcal{O}(\max\{\log T,A\})$, where $T$ is the total number of rounds and $A$ is the upper bound of the total attack cost.
	  	
\end{abstract}

\begin{IEEEkeywords} 
	Stochastic bandits, action-manipulation attack, UCB.
\end{IEEEkeywords}

\section{Introduction}
In order to develop trustworthy machine learning systems, understanding adversarial attacks on learning systems and correspondingly building robust defense mechanisms have attracted significant recent research interests~\cite{goodfellow2014explaining, huang2017adversarial, lin2017tactics,mei2015using, biggio2012poisoning, xiao2015feature, li2016data, alfeld2016data}. In this paper, we focus on multiple armed bandits (MABs), a simple but very powerful framework of online learning that makes decisions over time under uncertainty. MABs problem is widely investigated in machine learning and signal processing~\cite{5378483,7103356,8777299,5398950,5535151,7931644,6200864} and has many applicants in a variety of scenarios such as displaying advertisements \cite{Chapelle:2014:SSR:2699158.2532128}, articles recommendation \cite{Li:2010:CAP:1772690.1772758}, cognitive radios~\cite{Lai:TMC:11,8682263} and search engines~\cite{kveton2015cascading}, to name a few. 

Of particular relevance to our work is a line of interesting recent work on online reward-manipulation attacks on stochastic MABs~\cite{jun2018adversarial,liu2019data,Guan:AAAI:20,lykouris2018stochastic}. In the reward-manipulation attacks, there is an adversary who can change the reward signal from the environment, and hence the reward signal received by the user is not the true reward signal from the environment. In particular,~\cite{jun2018adversarial} proposes an interesting attack strategy that can force a user, who runs either $\epsilon$-Greedy and or Upper Confidence Bound (UCB) algorithm, to select a target arm while only spending effort that grows in logarithmic order.
\cite{liu2019data} proposes an optimization based framework for offline reward-manipulation attacks. Furthermore, it studies a form of online attack strategy that is effective in attacking any bandit algorithm that has a regret scaling in logarithm order, without knowing  what particular algorithm the user is using. \cite{Guan:AAAI:20} considers an attack model where an adversary attacks with a
certain probability at each round but its attack value can be
arbitrary and unbounded. The paper proposes algorithms that are robust to these types of attacks. \cite{lykouris2018stochastic} considers how to defend against reward-manipulation attacks, a complementary problem to ~\cite{jun2018adversarial,liu2019data}. In particular, \cite{lykouris2018stochastic} introduces a bandit algorithm that is robust to reward-manipulation attacks under certain attack cost, by using a multi-layer approach. 
\cite{DBLP:journals/corr/abs-1906-01528} introduces another model of adversary setting where each arm is able to manipulate its own reward and seeks to maximize its own expected number of pull count. Under this setting, \cite{DBLP:journals/corr/abs-1906-01528} analyzes the robustness of Thompson Sampling, UCB, and $\epsilon$-greedy with the adversary, and proves that all three algorithms achieve a regret upper bound that increases over rounds in a logarithmic order or increases with attack cost in a linear order. 
This line of reward-manipulation attack has also recently been investigated for contextual bandits in~\cite{DBLP:journals/corr/abs-1808-05760}, which develops an attack algorithm that can force the bandit algorithm to pull a target arm for a target contextual vector by slightly manipulating rewards in the data.
 
In this paper, we introduce a new class of attacks on MABs named action-manipulation attack. In the action-manipulation attack, an attacker, sitting between the environment and the user, can change the action selected by the user to another action. The user will then receive a reward from the environment corresponding to the action chosen by the attacker. Compared with the reward-manipulation attacks discussed above, the action-manipulation attack is more difficult to carry out. In particular, as the action-manipulation attack only changes the action, it can impact but does not have direct control of the reward signal, because the reward signal will be a random variable drawn from a distribution depending on the action chosen by the attacker. This is in contrast to reward-manipulation attacks where an attacker has direct control and can change the reward signal to any value.

In order to demonstrate the significant security threat of action-manipulation attacks to stochastic bandits, we propose an action-manipulation attack strategy against the widely used UCB algorithm. The proposed attack strategy aims to force the user to pull a target arm chosen by the attacker frequently. We assume that the attacker does not know the true mean reward of each arm. 
The assumption that the attacker does not know the mean rewards of arms is necessary, as otherwise the attacker can perform the attack trivially. To see this, with the knowledge of the mean rewards, the attacker knows which arm has the worst mean reward and can perform the following oracle attack: when the user pulls a non-target arm, the attacker change the arm to the worst arm. This oracle attack makes all non-target arms have expected rewards less than that of the target arm, if the target arm selected by the attacker is not the worst arm. In addition, under this attack, all sublinear-regret bandit algorithms will pull the target arm $\mathcal{O}(T)$ times. However, the oracle attack is not practical. The goal of our work is to develop an attack strategy that has similar performance of the oracle attack strategy without requiring the knowledge of the true mean rewards. When the user pulls a non-target arm, the attacker could decide to attack by changing the action to the possible worst arm. As the attacker does not know the true value of arms, our attack scheme relies on lower confidence bounds (LCB) of the value of each arm in making attack decision. Correspondingly, we name our attack scheme as LCB attack strategy. Our analysis shows that, if the target arm selected by the attacker is not the worst arm, the LCB attack strategy can successfully manipulate the user to select the target arm almost all the time with an only logarithmic cost. In particular, LCB attack strategy can force the user to pull the target arm $T-\mathcal{O}(\log(T))$ times over $T$ rounds, with total attack cost being only $\mathcal{O}(\log(T))$. On the other hand, we also show that, if the target arm is the worst arm and the attacker can only incur logarithmic costs, no attack algorithm can force the user to pull the worst arm more than $T-\mathcal{O}(T^\alpha)$ times. 

Motivated by the analysis of the action-manipulation attacks and the significant security threat to MABs, we then design a bandit algorithm which can defend against the action-manipulation attacks and still is able to achieve a small regret. 
The main idea of the proposed algorithm is to bound the maximum amount of offset, in terms of user's estimate of the mean rewards, that can be introduced by the action-manipulation attacks. We then use this estimate of maximum offset to properly modify the UCB algorithm and build specially designed high-probability upper bounds of the mean rewards so as to decide which arm to pull. We name our bandit algorithm as maximum offset upper confidence bound (MOUCB). In particular, our algorithm firstly pulls every arm a certain of times and then pulls the arm whose modified upper confidence bound is largest. 
Furthermore, we prove that MOUCB bandit algorithm has a pseudo-regret upper bounded by $\mathcal{O}(\max\{\log T,A\})$, where $T$ is the total number of rounds and $A$ is an upper bound for the total attack cost. In particular, if $A$ scales as $\log(T)$, MOUCB archives a logarithm pseudo-regret which is same as the regret of UCB algorithm.

The remainder of the paper is organized as follows. In Section \ref{sec2}, we describe the model. In Section~\ref{sec3}, we describe the LCB attack strategy and analyze its accumulative attack cost. In Section~\ref{sec:defend}, we propose MOUCB and analyze its regret. In Section \ref{sec4}, we provide numerical examples to validate the theoretic analysis. Finally, we offer several concluding remarks in Section \ref{sec5}. The proofs are collected in Appendix.

\section{Model}\label{sec2}
In this section, we introduce our model. We consider the standard multi-armed stochastic bandit problems setting. The environment consists of $K$ arms, with each arm corresponds to a fixed but unknown reward distribution. The bandit algorithm, which is also called ``user'' in this paper, proceeds in discrete time $t= 1, 2, \dots, T$, in which $T$ is the total number of rounds. At each round $t$, the user pulls an arm (or action) $I_t \in \{1, \dots, K\}$ and receives a random reward $r_t$ drawn from the reward distribution of arm $I_t$.  
Denote $\tau_i(t):= \{ s:s\le t,I_s=i \} $ as the set of rounds up to $t$ where the user chooses arm $i$, $N_i(t):=|\tau_i(t)|$ as the number of rounds that arm $i$ was pulled by the user up to time $t$ and \begin{equation}
\hat{\mu}_i(t):=N_i(t)^{-1}\sum_{s\in\tau_i(t)}r_s\label{eq:em}
\end{equation} as the empirical mean reward of arm $i$. 
The pseudo-regret $\bar{R}(T)$ is defined as
\begin{equation} \label{eq:pregertdefine}
\bar{R}(T)=T\max\limits_{\max_{i\in[K]}}\mu_i-\mathbb{E}\left[\sum_{t=1}^T r_t\right].
\end{equation}
The goal of the user to minimize $\bar{R}(T)$.


In this paper, we introduce a novel adversary setting, in which the attacker sits between the user and the environment. The attacker can monitor the actions of the user and the reward signals from the environment. Furthermore, the attacker can introduce action-manipulation attacks on stochastic bandits. In particular, at each round $t$, after the user chooses an arm $I_t$, the attacker can manipulate the user's action by changing $I_t$ to another $I_t^0 \in \{1, \dots, K\}$. If the attacker decides not to attack, $I_t^0=I_t$. The environment generates a random reward $r_t$ from the reward distribution of post-attack arm $I_t^0$. Then the user and the attacker receive reward $r_t$ from the environment. 
\begin{figure}[htbp]
	\begin{center}
		\includegraphics[width=0.45\textwidth]{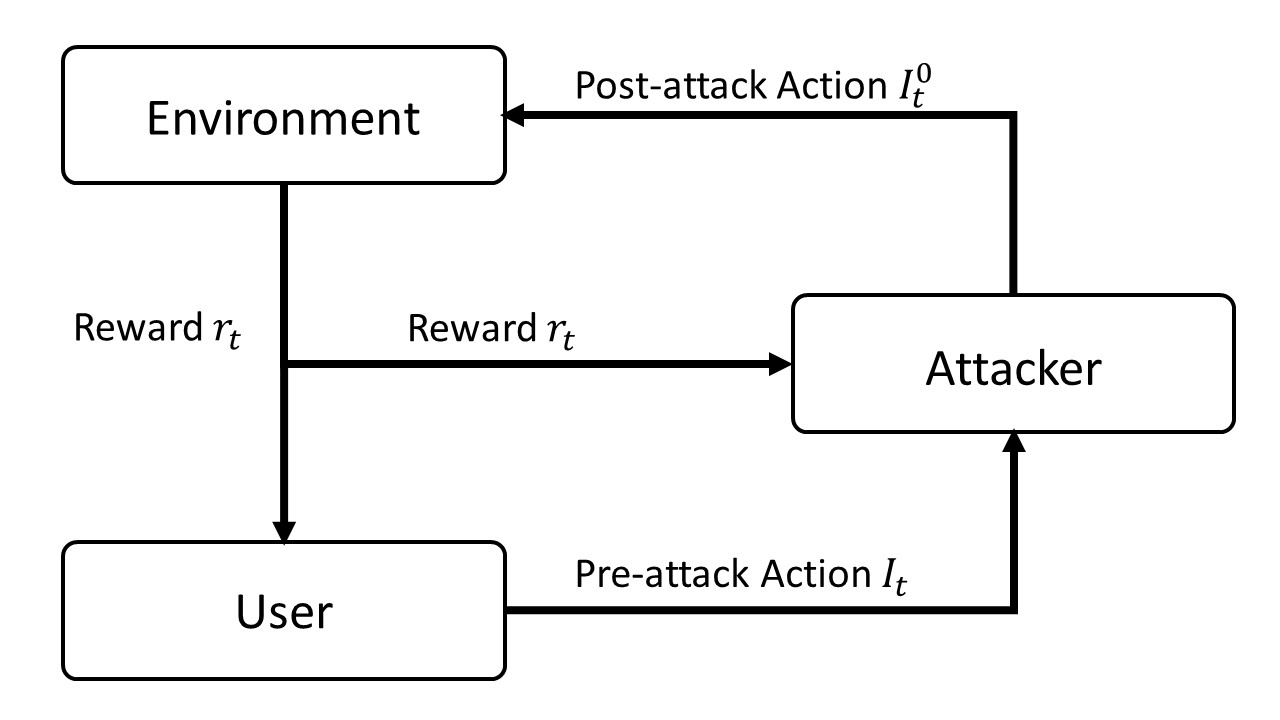}
		\caption{Action-manipulation attack model}
		\label{Model}
	\end{center}
\end{figure}

 Without loss of generality and for notation convenience, we assume arm $K$ is the ``attack target" arm or target arm. The attacker's goal is to manipulate the user into pulling the target arm very frequently but by making attacks as rarely as possible. Define the set of rounds when the attacker decides to attack as $\mathcal{C}:=\{ t: t\le T,I_t^0\not= I_t \}$. The cumulative attack cost is the total number of rounds where the attacker decides to attack, i.e., $|\mathcal{C}|$.

 In this paper, we assume that the reward distribution of arm $i$ follows $\sigma^2$-sub-Gaussian distributions with mean $\mu_i$. Denote the true reward vector as $\boldsymbol{\mu}=[\mu_1,\cdots,\mu_K]$. Neither the user nor the attacker knows $\boldsymbol{\mu}$, but $\sigma^2$ is known to both the user and the attacker. Define the difference of mean value of arm $i$ and $j$ as $\Delta_{i,j}=\mu_i-\mu_j$. Furthermore, we refer to the best arm as $i_O = \arg \max_i \mu_i$ and the worst arm as $i_W = \arg \min_i \mu_i$.

Note that the assumption that the attacker does not know $\boldsymbol{\mu}$ is important. If the attacker knows these values, the attacker can adopt a trivial oracle attack scheme: whenever the user pulls a non-target arm $I_t$, the attacker changes $I_t$ to the worst arm $i_W$. It is easy to show that, if the user uses a bandit algorithm that has a regret upper bounded of $\mathcal{O}(\log(T))$ when there is no attack, the oracle attack scheme can force the user to pull the target arm $T-\mathcal{O}(\log(T))$ times, using a cumulative cost $|\mathcal{C}|=\mathcal{O}(\log(T))$. However, the oracle attack scheme is not practical when the true reward vector is unknown. In this paper, we will first design an effective attack scheme, which does not assume the knowledge of true reward vector and nearly matches the performance of the oracle attack scheme, to attack the UCB algorithm. We will then design a new bandit algorithm that is robust against the action-manipulation attack.

The action-manipulation attack considered here is different from reward-manipulation attacks introduced by interesting recent work~\cite{jun2018adversarial, liu2019data}, where the attacker can change the reward signal from the environment. In the setting considered in~\cite{jun2018adversarial, liu2019data}, the attacker can change the reward signal $r_t$ from the environment to an arbitrary value chosen by the attacker. Correspondingly, the cumulative attack cost in~\cite{jun2018adversarial, liu2019data} is defined to be the sum of the absolute value of the changes on the reward. Compared with the reward-manipulation attacks discussed above, the action-manipulation attack is more difficult to carry out. In particular, as the action-manipulation attack only changes the action, it can impact but does not have direct control of the reward signal, which will be a random variable drawn from a distribution depending on the action chosen by the attacker. This is in contrast to reward-manipulation attacks where an attacker can change the rewards to any value.

\section{Attack on UCB and Cost Analysis}\label{sec3}
In this section, we use UCB algorithm as an example to illustrate the effects of action-manipulation attack. We will introduce LCB attack strategy on the UCB bandit algorithm and analyze the cost.

\subsection{Attack strategy} \label{sec:as}

UCB algorithm~\cite{bubeck2012regret} is one of the most popular bandit algorithm. In the UCB algorithm, the user initially pulls each of the $K$ arms once in the first $K$ rounds. After that, the user chooses arms according to
\begin{eqnarray}
I_t=\arg \max_i \left\{ \hat{\mu}_i(t-1)+3\sigma\sqrt{\frac{\log t}{N_i(t-1)}}\right\}.\label{eq:UCB}
\end{eqnarray}
Under the action-manipulation attack, as the user does not know that $r_t$ is generated from arm $I_t^0$ instead of $I_t$, the empirical mean $\hat{\mu}_i(t)$ computed using~\eqref{eq:em} is not a proper estimate of the true mean reward of arm $i$ anymore. On the other hand, the attack is able to obtain a good estimate of $\mu_i$ by \begin{eqnarray}
\hat{\mu}_i^0(t):=N_i^0(t)^{-1}\sum_{s\in\tau_i^0(t)}r_s,\label{eq:mu0}
\end{eqnarray} where $\tau_i^0(t):= \{ s:s\le t,I_s^0=i \} $ is the set of rounds up to $t$ when the attacker changes an arm to arm $i$, and $N_i^0(t)=|\tau_i^0(t)|$ is the number of pulls of post-attack arm $i$ up to round $t$. This information gap provides a chance for attack. In this section, we assume that the target arm is not the worst arm, i.e., $\mu_K > \mu_{i_W}$. We will discuss the case where the target arm is the worst arm in Section~\ref{sec:fail}.

The proposed attack strategy works as follows. In the first $K$ rounds, the attacker does not attack. After that, at round $t$, if the user chooses a non-target arm $I_t$, the attacker changes it to arm $I_t^0$ that has the smallest lower confidence bound (LCB):

\begin{eqnarray}
I_t^0=\arg \min_i \left\{\hat{\mu}_i^0(t-1)-\textbf{CB}\left(N_i^0(t-1)\right)\right\},\label{eq:attack}
\end{eqnarray} where 
\begin{equation}
\textbf{CB}(N) = \sqrt{\frac{2\sigma^2}{N}\log\frac{\pi^2 K N^2}{3\delta}}\label{eq:CB}.
\end{equation}
Here $\delta$ is a parameter that is related to the probability statements in the analytical results presented in Section~\ref{sec:ca}. We call our scheme as LCB attack strategy. If at round $t$ the user chooses the target arm, the attacker does not attack. Thus the cumulative attack cost of our LCB attack scheme is equal to the total of times when the non-target arms are selected by the user. The algorithm is summarized in Algorithm~\ref{alg:Framwork}.

 \begin{algorithm}[htb] 
 	\caption{ LCB attack strategy on UCB algorithm} 
 	\label{alg:Framwork} 
 	\begin{algorithmic}[1] 
 		\REQUIRE ~~\\ 
 		The user's bandit algorithm, target arm $K$
 		\FOR{$t = 1, 2, \dots$}
 		\STATE The user chooses arm $I_t$ to pull according to UCB algorithm~\eqref{eq:UCB}.\
 		\IF {$I_t=K$}
 		\STATE The attacker does not attack, and $I_t^0=I_t$.
 		\ELSE 
 		\STATE The attacker attacks and changes arm $I_t$ to $I_t^0$ chosen according to~\eqref{eq:attack}. 
 		\ENDIF
 		\STATE The environment generates reward $r_t$ from arm $I_t^0$.
 		\STATE The attacker and the user receive $r_t$.
 		\ENDFOR
 	\end{algorithmic}
 \end{algorithm}

 Here, we highlight the main idea why LCB attack strategy works. As discussed in Section~\ref{sec2}, if the attacker knows which arm is the worst, the attacker can simply change the action to the worst arm when the user pulls the non-target arm. The main idea of the attack scheme is to estimate the mean of each arm, and change the non-target arm to the arm whose lower confidence bound is the smallest. Effectively, this will almost always change the non-target arm to the worst arm. More formally, for $i\neq K$, we will show that this attack strategy will ensure that $\hat{\mu}_i$ computed using~\eqref{eq:em} by the user converges to $\mu_{i_W}$. On the other hand, as the attacker does not attack when the user selects $K$, $\hat{\mu}_K$ computed by the user will still converge to the true mean $\mu_K$ with $N_K$ increasing. Because the assumption that the target arm is not the worst, which implies that $\mu_K > \mu_{i_W}$, $\hat{\mu}_i$ could be smaller than $\hat{\mu}_K$. Then the user will rarely pull the non-target arms as $\hat{\mu}_i$ is smaller than $\hat{\mu}_K$. Hence, the attack cost would also be small. The rigorous analysis of the cost will be provided in Section~\ref{sec:ca}.
 
 
\subsection{Cost analysis}\label{sec:ca}
To analyze the cost of the proposed scheme, we need to track $\hat{\mu}_i^0(t)$, the estimate obtained by the attacker using~\eqref{eq:mu0}, and $\hat{\mu}_i(t)$, the estimate obtained by the user using~\eqref{eq:em}.

The analysis of $\hat{\mu}_i^0(t)$ is relatively simple, as the attacker knows which arm is truly pulled and hence $\hat{\mu}_i^0(t)$ is the true estimate of the mean of arm $i$. Define event 
\begin{align}
    \mathcal{E}_1 := \{ \forall i, \forall t > K: |\hat{\mu}_i^0(t) - \mu_i|< \textbf{CB}(N_i^0(t)) \}.
\end{align}
Roughly speaking, event $\mathcal{E}_1$ is the event that the empirical mean computed by the attacker using~\eqref{eq:mu0} is close to the true mean. The following lemma, proved in~\cite{jun2018adversarial}, shows that the attacker can accurately estimate the average reward to each arm.
\begin{lem}\label{thm:ab}
(Lemma 1 in \cite{jun2018adversarial}) For $\delta \in (0,1)$,  $\mathbb{P}(\mathcal{E}_1)>1-\delta$.
\end{lem}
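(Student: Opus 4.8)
The plan is to prove the high-probability statement by the standard two-stage argument: a sub-Gaussian (Hoeffding-type) tail bound applied to each arm at each fixed observation count, followed by a union bound over arms and over all possible observation counts. The definition of $\textbf{CB}$ is reverse-engineered so that these two ingredients combine to give exactly $\delta$.

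First I would fix an arm $i$ and let $X_{i,1}, X_{i,2}, \dots$ denote the \iid\ sequence of rewards the environment produces for arm $i$, listed in the order in which $i$ is chosen as the post-attack arm $I_t^0$; each $X_{i,s}$ is $\sigma^2$-sub-Gaussian with mean $\mu_i$, so for any fixed integer $n \ge 1$ the partial average $\bar{X}_{i,n} := \frac{1}{n}\sum_{s=1}^{n} X_{i,s}$ satisfies $\mathbb{P}\big(|\bar{X}_{i,n} - \mu_i| \ge \varepsilon\big) \le 2\exp\!\big(-n\varepsilon^2/(2\sigma^2)\big)$. The key computation is to take $\varepsilon = \textbf{CB}(n)$: since $\textbf{CB}(n)^2 = \frac{2\sigma^2}{n}\log\frac{\pi^2 K n^2}{3\delta}$, the exponent collapses to $-\log\frac{\pi^2 K n^2}{3\delta}$, which gives $\mathbb{P}\big(|\bar{X}_{i,n} - \mu_i| \ge \textbf{CB}(n)\big) \le \frac{6\delta}{\pi^2 K n^2}$.

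Next I would note that because the attacker does not attack during the first $K$ rounds while UCB pulls every arm exactly once there, $N_i^0(t) \ge 1$ holds for every $t > K$, and $\hat{\mu}_i^0(t)$ equals the partial average $\bar{X}_{i,\,N_i^0(t)}$. Hence the bad event satisfies $\mathcal{E}_1^c \subseteq \bigcup_{i=1}^{K}\bigcup_{n \ge 1}\big\{|\bar{X}_{i,n} - \mu_i| \ge \textbf{CB}(n)\big\}$, and a union bound with the per-$(i,n)$ estimate yields $\mathbb{P}(\mathcal{E}_1^c) \le \sum_{i=1}^{K}\sum_{n\ge 1}\frac{6\delta}{\pi^2 K n^2} = \frac{6\delta}{\pi^2}\sum_{n\ge 1}\frac{1}{n^2} = \delta$, which is the claim.

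The only point needing care — rather than a genuine obstacle — is the reduction in the previous paragraph: one must justify that controlling the deviations of the deterministically-indexed averages $\bar{X}_{i,n}$ simultaneously over all $n$ suffices to control $\hat{\mu}_i^0(t)$ even though $N_i^0(t)$ is a random, data-dependent count. This goes through because each event $\{|\bar{X}_{i,n} - \mu_i| \ge \textbf{CB}(n)\}$ depends only on the first $n$ rewards drawn from arm $i$, so no maximal inequality or optional-stopping machinery is required; this is precisely the ``$1/n^2$ summable over $n$'' device familiar from the analysis of UCB-type algorithms, and it is why the $n^2$ factor sits inside the logarithm in the definition of $\textbf{CB}$.
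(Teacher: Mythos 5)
Your proof is correct and follows essentially the same route as the paper (which defers Lemma~\ref{thm:ab} to the cited reference but reproduces the identical argument for Lemma~\ref{thm:ab2} in Appendix~A): a Hoeffding bound for $\sigma^2$-sub-Gaussian averages at each fixed count $n$, with $\textbf{CB}(n)$ chosen so the tail is $\frac{6\delta}{\pi^2 K n^2}$, followed by a union bound over arms and counts using $\sum_{n\ge 1} n^{-2} = \pi^2/6$. Your explicit handling of the random index $N_i^0(t)$ via inclusion in the union over deterministic $n$ is a point the paper glosses over, but it is the standard and correct justification.
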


The analysis of $\hat{\mu}_i(t)$ computed by the user is more complicated. When the user pulls arm $i$, because of the action-manipulation attacks, the random rewards may be drawn from different reward distributions. Define $\tau_{i,j}(t):= \{ s:s\le t,I_s=i \ \text{and} \ I_s^0=j \}$ as the set of rounds up to $t$ when the user chooses arm $i$ and the attacker changes it to arm $j$. Lemma~\ref{thm:ab2} shows a high-probability confidence bounds of $\hat{\mu}_{i,j}(t) := N_{i,j}(t)^{-1}\sum_{s\in\tau_{i,j}(t)}r_s$, the empirical mean rewards of a part of arm $i$ whose post-attack arm is $j$, where $N_{i,j}(t) := |\tau_{i,j}(t)|$. Define event
\begin{equation}
  \begin{split}
  \mathcal{E}_2 := \left\{ \forall i, \forall j, \forall t>K: |\hat{\mu}_{i,j}(t) - \mu_j|<    \right. &\\
 \left. \sqrt{\frac{2\sigma^2}{N_{i,j}(t)}\log\frac{\pi^2 K^2 (N_{i,j}(t))^2}{3\delta}}  \right\} &.
  \end{split}
\end{equation}
\begin{lem}\label{thm:ab2}
	For  $\delta \in (0,1)$,  $\mathbb{P}(\mathcal{E}_2)>1-\delta$.
\end{lem}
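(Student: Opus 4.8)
The plan is to mirror the proof of Lemma~\ref{thm:ab} (Lemma 1 of \cite{jun2018adversarial}), only carried out at the finer granularity of ordered pairs $(i,j)$ instead of single arms; the appearance of $K^2$ rather than $K$ inside the logarithm in the definition of $\mathcal{E}_2$ is precisely the bookkeeping cost of this refinement. First I would fix a pair $(i,j)$ and observe that the rewards $\{r_s : s\in\tau_{i,j}(t)\}$ are conditionally independent draws from the reward distribution of the post-attack arm $j$: since both $I_s$ and $I_s^0$ are chosen before $r_s$ is revealed, on the event $\{I_s=i,\,I_s^0=j\}$ and conditioned on the history up to round $s$, the reward $r_s$ has mean $\mu_j$ and is $\sigma^2$-sub-Gaussian. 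Hence the centered partial sums $\sum_{s\in\tau_{i,j}(t)}(r_s-\mu_j)$ form a sub-Gaussian martingale-difference-type sequence, and a Hoeffding/Azuma tail bound controls $\hat{\mu}_{i,j}(t)-\mu_j$ for any fixed number of collected samples.

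Next I would turn the ``for all $t>K$'' quantifier into a union bound over the realized sample count $n\ge1$. Because $\hat{\mu}_{i,j}(t)$ depends on $t$ only through the samples already placed in bucket $(i,j)$, the event that the stated inequality fails for some $t>K$ with the bucket nonempty is contained in the event that it fails for the average of the first $n$ such samples for some integer $n\ge1$. For a fixed $n$, substituting $\epsilon_n=\sqrt{\frac{2\sigma^2}{n}\log\frac{\pi^2K^2n^2}{3\delta}}$ into the sub-Gaussian tail bound makes the exponent equal to $\log\frac{\pi^2K^2n^2}{3\delta}$, so the failure probability for that $n$ is at most $\frac{6\delta}{\pi^2K^2n^2}$. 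Summing over $n\ge1$ and using $\sum_{n\ge1}n^{-2}=\pi^2/6$ yields a per-pair failure probability at most $\delta/K^2$, and summing over the at most $K^2$ pairs $(i,j)$ gives total failure probability at most $\delta$; that is, $\mathbb{P}(\mathcal{E}_2)>1-\delta$. The constant $\pi^2K^2/3$ inside the logarithm is engineered exactly so that these two series collapse to $\delta$.

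The hard part will be the very first step: rigorously justifying sub-Gaussian concentration for the data-dependent sample size $N_{i,j}(t)$, given that the user's choice $I_s$ and the attacker's choice $I_s^0$ are adaptive functions of the observed history. This is the standard obstacle in bandit concentration arguments, and I would resolve it in the usual way — either by a stopping-time/maximal-inequality argument coupling the rounds in $\tau_{i,j}(t)$ to a sub-sequence of the i.i.d.\ reward stream of arm $j$ (the indicator of $\{I_s=i,\,I_s^0=j\}$ being predictable with respect to $\mathcal{F}_{s-1}$), or by directly invoking a uniform concentration inequality for sub-Gaussian martingale differences and then peeling over $n$ as above. Once this is in place, everything reduces to the two elementary series computations already described, which are identical in spirit to those in \cite{jun2018adversarial}, so I would not belabor them.
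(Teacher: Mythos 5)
Your proposal is correct and follows essentially the same route as the paper's proof: a per-pair Hoeffding/sub-Gaussian tail bound at confidence level $\frac{6\delta}{\pi^2K^2n^2}$, a union bound over the sample count $n\ge 1$ using $\sum_n n^{-2}=\pi^2/6$, and a union bound over the $K^2$ pairs $(i,j)$. If anything you are more careful than the paper, which applies Hoeffding directly to the bucketed samples as if they were an i.i.d.\ prefix and does not comment on the adaptivity of $I_s$ and $I_s^0$; your remark that the indicator of $\{I_s=i,\ I_s^0=j\}$ is predictable and that the argument should be routed through a stopping-time or martingale maximal inequality is exactly the standard fix.
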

\begin{proof}
	Please refer to Appendix~\ref{app:lemma2}.
\end{proof}
Although $r_s$ in \eqref{eq:em}, used to calculate $\hat{\mu}_i(t)$, may be drawn from different reward distributions, we can build a high-probability bound of $\hat{\mu}_i(t)$ with the help of Lemma~\ref{thm:ab2}. 
\begin{lem}\label{thm:equaltoab2}
Under event $\mathcal{E}_2$, for all arm $i$ and all $t>K$, we have \begin{align}
    \left| \hat{\mu}_i(t) - \frac{1}{N_i(t)}\sum_{s \in \tau_i(t)} \mu_{I_s^0}\right| < \beta(N_i(t)),
\end{align}
where
\begin{align}
\beta(N_i(t)) = \sqrt{\frac{2\sigma^2K}{N_i(t)}\log\frac{\pi^2 (N_i(t))^2}{3\delta}}.
\end{align}
\end{lem}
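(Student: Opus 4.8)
The plan is to exploit the partition of $\tau_i(t)$ into the sets $\tau_{i,j}(t)$ according to the post-attack arm $j$. Writing $N=N_i(t)$ and $n_j=N_{i,j}(t)$, so that $\sum_{j=1}^K n_j=N$, this partition gives $\hat\mu_i(t)=\frac1N\sum_j n_j\hat\mu_{i,j}(t)$ and $\frac1N\sum_{s\in\tau_i(t)}\mu_{I^0_s}=\frac1N\sum_j n_j\mu_j$, hence
\begin{align}
\left|\hat\mu_i(t)-\frac1N\sum_{s\in\tau_i(t)}\mu_{I^0_s}\right|\;\le\;\frac1N\sum_{j:\,n_j\ge1}n_j\,\bigl|\hat\mu_{i,j}(t)-\mu_j\bigr|.
\end{align}
First I would invoke event $\mathcal{E}_2$ to bound each $|\hat\mu_{i,j}(t)-\mu_j|$, which turns the right-hand side into $\frac1N\sum_{j:\,n_j\ge1}\sqrt{2\sigma^2 n_j\log\frac{\pi^2K^2n_j^2}{3\delta}}$. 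It then remains to prove the purely numerical inequality
\begin{align}
\sum_{j:\,n_j\ge1}\sqrt{2\sigma^2 n_j\log\frac{\pi^2K^2n_j^2}{3\delta}}\;\le\;\sqrt{2\sigma^2KN\log\frac{\pi^2N^2}{3\delta}}
\end{align}
for nonnegative integers $n_1,\dots,n_K$ summing to $N$, with $m:=|\{j:n_j\ge1\}|\le\min\{K,N\}$ nonzero terms; dividing by $N$ then gives exactly $\beta(N)$ on the right.

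For this inequality I would apply Cauchy--Schwarz on the $m$ nonzero terms: $\sum_j\sqrt{n_j}\,\sqrt{\log\frac{\pi^2K^2n_j^2}{3\delta}}\le\bigl(\sum_j n_j\bigr)^{1/2}\bigl(\sum_j\log\frac{\pi^2K^2n_j^2}{3\delta}\bigr)^{1/2}=\sqrt N\,\bigl(\sum_j\log\frac{\pi^2K^2n_j^2}{3\delta}\bigr)^{1/2}$. By concavity of the logarithm (equivalently AM--GM), $\sum_j\log n_j\le m\log(N/m)$, so $\sum_j\log\frac{\pi^2K^2n_j^2}{3\delta}\le m\log\frac{\pi^2K^2N^2}{3\delta m^2}$; since $x\mapsto x\log\frac{\pi^2K^2N^2}{3\delta x^2}$ is nondecreasing on the relevant range (its derivative equals $\log\frac{\pi^2K^2N^2}{3\delta x^2}-2$, which is nonnegative for $x\le K$ once $N$ is not too small) and $m\le K$, we obtain $\sum_j\log\frac{\pi^2K^2n_j^2}{3\delta}\le K\log\frac{\pi^2K^2N^2}{3\delta K^2}=K\log\frac{\pi^2N^2}{3\delta}$. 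Substituting back and multiplying by $\sqrt{2\sigma^2}$ yields the claimed bound. An equivalent route that avoids the monotonicity-in-$m$ step is to verify that $g(n):=\sqrt{2\sigma^2 n\log\frac{\pi^2K^2n^2}{3\delta}}$ is concave on $[1,\infty)$ with $g(0^+)=0$ --- which reduces to the elementary estimate $2\phi\phi''\le(\phi')^2$ for $\phi(n)=n\log\frac{\pi^2K^2n^2}{3\delta}$ --- and then to apply Jensen's inequality and monotonicity of $x\mapsto x\,g(N/x)$ to get $\sum_j g(n_j)\le m\,g(N/m)\le K\,g(N/K)=N\beta(N)$.

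The step I expect to be the main obstacle is this last numerical inequality, and in particular absorbing the extra factor $K^2$ inside the logarithm coming from $\mathcal{E}_2$: the key point is that, under the constraint $\sum_j n_j=N$, spreading the counts evenly over all $K$ post-attack arms is the worst case, and precisely then the $K^2$ inside the log cancels against the factor $K$ in $\beta(N)^2$. One also has to be mindful of the degenerate small-sample regime (e.g.\ $N_i(t)=1$ with $K$ large and $\delta$ close to $1$), where the monotonicity argument needs $N$ bounded away from $0$; this is immaterial in the parameter regime used later (where $\delta$ is taken polynomially small in $T$), and the concavity-based variant handles it cleanly in any case.
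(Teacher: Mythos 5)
Your proposal is correct and follows essentially the same route as the paper: decompose $\tau_i(t)$ into the sets $\tau_{i,j}(t)$, apply $\mathcal{E}_2$ termwise, and show that the uniform split $N_{i,j}(t)=N_i(t)/K$ is the worst case, at which point the $K^2$ inside the logarithm cancels to yield $\beta(N_i(t))$ --- the paper performs this last step by applying Jensen's inequality to the concave function $f(N)=\sqrt{2\sigma^2N\log(\pi^2K^2N^2/(3\delta))}$, which is precisely the ``equivalent route'' you sketch, and your primary Cauchy--Schwarz argument is an interchangeable alternative. The degenerate small-$N_i(t)$, large-$\delta$ wrinkle you flag is genuine (the paper applies Jensen over all $K$ terms including zero counts without comment), but as you note it is immaterial in the parameter regime actually used.
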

\begin{proof}
	Please refer to Appendix~\ref{app:lemma3}.
\end{proof}
Under events $\mathcal{E}_1$ and $\mathcal{E}_2$, we can build a connection between $\hat{u}_i(t)$ and $\mu_{i_W}$. In the proposed LCB attack strategy, the attacker explores and exploits the worst arm by a lower confidence bound method. Thus, when the user pulls a non-target arm, the attacker changes it to the worst arm at most of rounds, which means that for all $i \ne K$, $\hat{u}_i(t)$ will converge to $\mu_{i_W}$ as $N_i(t)$ increases. Lemma~\ref{thm:ab3} shows the relationship between $\hat{u}_i(t)$ and $\mu_{i_W}$.
\begin{lem}\label{thm:ab3}
Under events $\mathcal{E}_1$ and $\mathcal{E}_2$, using LCB attack strategy~\ref{alg:Framwork}, we have
\begin{align} 
    \hat{\mu}_i(t) \le u_{i_W}&+\frac{1}{N_i(t)}\sum_{j \ne i_W}\frac{8\sigma^2}{\Delta_{j,i_W}}\log \frac{\pi^2Kt^2}{3\delta}\\
    &+\sqrt{\frac{2\sigma^2K}{N_i(t)}\log\frac{\pi^2 (N_i(t))^2}{3\delta}},\forall i, t.
\end{align}
\end{lem}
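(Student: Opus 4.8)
The plan is to start from Lemma~\ref{thm:equaltoab2}, which already controls $\hat\mu_i(t)$ in terms of $\frac{1}{N_i(t)}\sum_{s\in\tau_i(t)}\mu_{I_s^0}$, and then to show that this average of post-attack means is essentially $\mu_{i_W}$ plus a logarithmic correction. Throughout I would focus on a non-target arm $i\neq K$ (which is the case used in the sequel; for $i=K$ the stated inequality cannot hold since $\hat\mu_K(t)\to\mu_K>\mu_{i_W}$ while the right side tends to $\mu_{i_W}$, so the ``$\forall i$'' should be read as ``for every $i\neq K$''). With $N_{i,j}(t)$ as defined before Lemma~\ref{thm:ab2} and using $\sum_j N_{i,j}(t)=N_i(t)$ together with $\Delta_{i_W,i_W}=0$, I decompose
\begin{align}
  \frac{1}{N_i(t)}\sum_{s\in\tau_i(t)}\mu_{I_s^0}
  = \mu_{i_W} + \frac{1}{N_i(t)}\sum_{j\neq i_W} N_{i,j}(t)\,\Delta_{j,i_W}.
\end{align}
Combined with Lemma~\ref{thm:equaltoab2} (valid under $\mathcal{E}_2$), the lemma then follows once we establish $N_{i,j}(t)\,\Delta_{j,i_W}\le \frac{8\sigma^2}{\Delta_{j,i_W}}\log\frac{\pi^2Kt^2}{3\delta}$ for every $j\neq i_W$, i.e. $N_{i,j}(t)\le \frac{8\sigma^2}{\Delta_{j,i_W}^2}\log\frac{\pi^2Kt^2}{3\delta}$.

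The core step is this last count bound, which is the standard UCB/LCB ``number of pulls of a suboptimal arm'' argument, here applied to the attacker's rule~\eqref{eq:attack} viewed as a bandit algorithm that treats the worst arm $i_W$ as its ``optimal'' arm. Since $i\neq K$, every round counted by $N_{i,j}(t)$ is a round on which the attacker actually executed~\eqref{eq:attack} and output $j$, so $N_{i,j}(t)$ is at most the total number of rounds $s\le t$ on which~\eqref{eq:attack} outputs $j$; it suffices to bound the latter. On any such round $s$, outputting $j$ means $\hat\mu^0_j(s-1)-\textbf{CB}(N^0_j(s-1))\le \hat\mu^0_{i_W}(s-1)-\textbf{CB}(N^0_{i_W}(s-1))$. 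Under $\mathcal{E}_1$ (Lemma~\ref{thm:ab}) the right side is $<\mu_{i_W}$ and $\hat\mu^0_j(s-1)>\mu_j-\textbf{CB}(N^0_j(s-1))$, so $\mu_j-2\textbf{CB}(N^0_j(s-1))<\mu_{i_W}$, i.e. $2\textbf{CB}(N^0_j(s-1))>\Delta_{j,i_W}$. Squaring and inserting~\eqref{eq:CB} gives $N^0_j(s-1)<\frac{8\sigma^2}{\Delta_{j,i_W}^2}\log\frac{\pi^2K(N^0_j(s-1))^2}{3\delta}$, and since $N^0_j(s-1)\le s-1\le t$ I enlarge the argument of the logarithm to obtain $N^0_j(s-1)<\frac{8\sigma^2}{\Delta_{j,i_W}^2}\log\frac{\pi^2Kt^2}{3\delta}$. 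As $N^0_j$ increments by one on each such round (and, for $j\neq K$, only on such rounds), the usual counting conclusion is that the number of these rounds up to $t$, hence $N_{i,j}(t)$, is at most $\frac{8\sigma^2}{\Delta_{j,i_W}^2}\log\frac{\pi^2Kt^2}{3\delta}$ (up to the customary integer rounding).

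Substituting this back and invoking Lemma~\ref{thm:equaltoab2} then yields exactly
\begin{align}
  \hat\mu_i(t)\le \mu_{i_W}+\frac{1}{N_i(t)}\sum_{j\neq i_W}\frac{8\sigma^2}{\Delta_{j,i_W}}\log\frac{\pi^2Kt^2}{3\delta}+\sqrt{\frac{2\sigma^2K}{N_i(t)}\log\frac{\pi^2(N_i(t))^2}{3\delta}}.
\end{align}
The step I expect to need the most care is the bookkeeping in the counting argument: relating $N_{i,j}(t)$ to the number of times~\eqref{eq:attack} outputs $j$, and that quantity to $N^0_j(t)$. This is transparent for $j\neq K$, but for $j=K$ the quantity $N^0_K(t)$ is inflated by the many rounds in which the user pulls $K$ directly and no attack occurs, so one must argue only through the attacker-initiated increments of $N^0_K$ (each of which still forces $N^0_K<\frac{8\sigma^2}{\Delta_{K,i_W}^2}\log\frac{\pi^2Kt^2}{3\delta}$ at the time it happens, because the attack target $K$ is not the worst arm). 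One also uses $t>K$ so that $\mathcal{E}_1,\mathcal{E}_2$ and all the indexed quantities are meaningful, and the monotonicity of the logarithm when replacing $N^0_j(s-1)$ by $t$ inside it.
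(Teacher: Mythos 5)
Your proof is correct and follows essentially the same route as the paper's: decompose $\frac{1}{N_i(t)}\sum_{s\in\tau_i(t)}\mu_{I_s^0}$ over post-attack arms, use the LCB selection rule together with event $\mathcal{E}_1$ to get $2\,\textbf{CB}(N_j^0)>\Delta_{j,i_W}$ and hence $N_{i,j}(t)\le N_j^0(t)<\frac{8\sigma^2}{\Delta_{j,i_W}^2}\log\frac{\pi^2Kt^2}{3\delta}$, then conclude via Lemma~\ref{thm:equaltoab2} under $\mathcal{E}_2$. Your side remarks that the ``$\forall i$'' must be read as $i\neq K$ and the extra bookkeeping for $j=K$ are valid refinements of points the paper leaves implicit (the paper's own chain $N_{i,j}(t)\le N_j^0(t)$ plus monotonicity already covers $j=K$ uniformly).
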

\begin{proof}
	Please refer to Appendix~\ref{app:Lemma4}.
\end{proof}
Lemma~\ref{thm:ab3} shows an upper bound of the empirical mean reward of pre-attack arm $i$, for all arm $i \ne K$. Our main results is the following upper bound on the attack cost $ |\mathcal{C}|$.
\begin{theorem}\label{thm:ab4}
With probability at least $1-2\delta$, when  $T\ge\left(\frac{\pi^2K}{3\delta}\right)^{\frac{2}{5}}$, using LCB attack strategy specified in Algorithm~\ref{alg:Framwork}, the attacker can manipulate the user into pulling the target arm in at least $T - |\mathcal{C}|$ rounds, with an attack cost  
\begin{equation}
\begin{split}
    |\mathcal{C}| \le &\frac{K-1}{4\Delta_{K,i_W}^2}\left(3\sigma\sqrt{\log  T}+\sqrt{2\sigma^2K\log\frac{\pi^2 T^2}{3\delta}} \right.\\
&+\left(\left(3\sigma\sqrt{\log T}+\sqrt{2\sigma^2K\log\frac{\pi^2  T^2}{3\delta}}\right)^2 \right.\\
& \left.\left. +4\Delta_{K,i_W}\sum_{j \ne i_W}\frac{8\sigma^2}{\Delta_{j,i_W}}\log \frac{\pi^2KT^2}{3\delta}\right)^\frac{1}{2} \right) ^2.
\end{split}
\end{equation}
\end{theorem}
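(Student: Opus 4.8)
The plan is to bound, for each non-target arm $i\neq K$, the total number of pulls $N_i(T)$ the user makes, and then use the fact, already noted in Section~\ref{sec:as}, that the LCB attack strategy attacks on \emph{exactly} the rounds where the user pulls a non-target arm, so that $|\mathcal{C}|=\sum_{i\neq K}N_i(T)$. All estimates below are on the event $\mathcal{E}_1\cap\mathcal{E}_2$, which by Lemmas~\ref{thm:ab} and~\ref{thm:ab2} and a union bound has probability at least $1-2\delta$. Fix $i\neq K$ and let $t\le T$ be the last round at which the user pulls arm $i$, so that $N_i(t-1)=N_i(T)-1$ and $t>K$. By the UCB selection rule~\eqref{eq:UCB}, at round $t$ the index of arm $i$ is no smaller than that of the target arm,
\[
\hat{\mu}_i(t-1)+3\sigma\sqrt{\tfrac{\log t}{N_i(t-1)}}\ \ge\ \hat{\mu}_K(t-1)+3\sigma\sqrt{\tfrac{\log t}{N_K(t-1)}}.
\]

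First I would lower bound the right-hand side by $\mu_K$. Since the attacker never attacks when the user pulls arm $K$, the samples averaged in $\hat{\mu}_K$ are genuine i.i.d.\ draws from arm $K$; applying event $\mathcal{E}_2$ with $i=j=K$ controls $\hat\mu_K(t-1)$, and the technical hypothesis $T\ge(\pi^2K/(3\delta))^{2/5}$ is precisely what ensures that, for every $t\le T$, the UCB bonus $3\sigma\sqrt{\log t/N_K(t-1)}$ dominates this confidence width, so arm $K$'s index is at least $\mu_K$. Next I would upper bound arm $i$'s index via Lemma~\ref{thm:ab3}: bounding $t\le T$ and $N_i(t-1)\le T$ inside the logarithms and absorbing both the UCB bonus $3\sigma\sqrt{\log t/N_i(t-1)}$ and the term $\sqrt{(2\sigma^2K/N_i(t-1))\log(\pi^2 N_i(t-1)^2/(3\delta))}$ into a single $b/\sqrt{N_i(t-1)}$, I obtain, writing $N:=N_i(t-1)$,
\[
\mu_K\ \le\ \mu_{i_W}+\frac{a}{N}+\frac{b}{\sqrt{N}},\qquad a=\sum_{j\neq i_W}\frac{8\sigma^2}{\Delta_{j,i_W}}\log\frac{\pi^2 KT^2}{3\delta},\quad b=3\sigma\sqrt{\log T}+\sqrt{2\sigma^2K\log\frac{\pi^2 T^2}{3\delta}}.
\]

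This rearranges to $a x^2+b x-\Delta_{K,i_W}\ge 0$ with $x=1/\sqrt{N}>0$; taking the relevant (positive) root and rationalizing gives $\sqrt N\le (b+\sqrt{b^2+4\Delta_{K,i_W}a})/(2\Delta_{K,i_W})$, i.e.
\[
N_i(T)-1\ =\ N\ \le\ \frac{1}{4\Delta_{K,i_W}^2}\Bigl(b+\sqrt{b^2+4\Delta_{K,i_W}a}\Bigr)^2,
\]
which is exactly the per-arm expression in the statement. Summing this bound over the $K-1$ non-target arms then bounds $|\mathcal{C}|=\sum_{i\neq K}N_i(T)$ and yields the theorem, while the user pulls the target arm at least $T-|\mathcal{C}|$ times by definition. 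The step I expect to be the main obstacle is the first one: turning the generic confidence bound for the target arm into the clean inequality ``index of arm $K\ge\mu_K$'' with the exact threshold $T\ge(\pi^2K/(3\delta))^{2/5}$, together with the ``last pull of arm $i$'' bookkeeping needed so that the monotone-in-$N$ inequality converts into an explicit bound on $N_i(T)$; by comparison, solving the quadratic and the union bound are routine.
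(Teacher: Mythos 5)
Your proposal follows essentially the same route as the paper's proof: condition on $\mathcal{E}_1\cap\mathcal{E}_2$, compare the UCB indices of a non-target arm $i$ and the target arm at a round where $i$ is pulled, show the target arm's index is at least $\mu_K$ using the threshold on $T$, upper-bound $\hat{\mu}_i$ via Lemma~\ref{thm:ab3} with $t\le T$ and $N_i\le T$ in the logarithms, and solve the resulting quadratic in $1/\sqrt{N_i}$ before summing over the $K-1$ non-target arms. The only immaterial deviation is that you control $\hat{\mu}_K$ via $\mathcal{E}_2$ with $i=j=K$ (whose width carries $K^2$ inside the logarithm), whereas the paper reuses the Lemma~\ref{thm:ab} argument to get the width $\textbf{CB}(N_K)$ to which the threshold $T\ge(\pi^2K/(3\delta))^{2/5}$ is exactly calibrated.
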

\begin{proof}
	Please refer to Appendix~\ref{app:thm 1}.
\end{proof}
The expression of the cost bound in Theorem~\ref{thm:ab4} is complicated. The following corollary provides a simpler bound that is more explicit and interpretable.

\begin{corollary}\label{cor:ab}
Under the same assumptions in Theorem~\ref{thm:ab4}, the total attack cost $ |\mathcal{C}|$ of Algorithm~\ref{alg:Framwork} is upper bounded by
\begin{equation}
\begin{split}
    \mathcal{O}\left(\frac{K\sigma^2\log T}{\Delta_{K,i_W}^2}\left(K+\sum_{j \ne i_W}\frac{\Delta_{K,i_W}}{\Delta_{j,i_W}}+\sqrt{K\sum_{j \neq i_W}\frac{\Delta_{K,i_W}}{\Delta_{j,i_W}}}\right)\right),
\end{split}
\end{equation}
and the total number of target arm pulls is $T- |\mathcal{C}|$.
\end{corollary}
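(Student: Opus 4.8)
The plan is to derive the $\mathcal{O}$-expression directly from the explicit bound in Theorem~\ref{thm:ab4} by a short chain of elementary estimates, keeping $K$, $\Delta_{K,i_W}$ and the $\Delta_{j,i_W}$ visible while treating $\sigma$ and $\delta$ as constants. Write the cost bound of Theorem~\ref{thm:ab4} compactly as $|\mathcal{C}|\le\frac{K-1}{4\Delta_{K,i_W}^2}\bigl(a+\sqrt{a^2+b}\bigr)^2$, with $a=3\sigma\sqrt{\log T}+\sqrt{2\sigma^2K\log\frac{\pi^2T^2}{3\delta}}$ and $b=32\,\Delta_{K,i_W}\,\sigma^2\log\frac{\pi^2KT^2}{3\delta}\sum_{j\ne i_W}\frac1{\Delta_{j,i_W}}$.

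First I would collapse the nested radical. Since $\sqrt{a^2+b}\le a+\sqrt b$ for $a,b\ge0$, we have $a+\sqrt{a^2+b}\le 2a+\sqrt b$ and hence $\bigl(a+\sqrt{a^2+b}\bigr)^2\le 4a^2+4a\sqrt b+b$. It is important to retain the cross term $4a\sqrt b$ here rather than absorbing it, because this term is exactly what produces the middle summand $\sqrt{K\sum_{j\ne i_W}\Delta_{K,i_W}/\Delta_{j,i_W}}$ in the statement; a cruder estimate such as $\bigl(a+\sqrt{a^2+b}\bigr)^2\le 4a^2+2b$ would still be valid (and only weaker by a constant, since $\sqrt{xy}\le x+y$) but would not reproduce the stated form.

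Next I would reduce every logarithm to $\mathcal{O}(\log T)$. The hypothesis $T\ge(\pi^2K/3\delta)^{2/5}$ gives $\log\frac{\pi^2K}{3\delta}\le\frac52\log T$, so $\log\frac{\pi^2T^2}{3\delta}=2\log T+\log\frac{\pi^2}{3\delta}=\mathcal{O}(\log T)$ and $\log\frac{\pi^2KT^2}{3\delta}=2\log T+\log\frac{\pi^2K}{3\delta}=\mathcal{O}(\log T)$. Plugging these in and using $(x+y)^2\le 2x^2+2y^2$ to expand $a$, one obtains $a^2=\mathcal{O}(\sigma^2K\log T)$, $b=\mathcal{O}\bigl(\Delta_{K,i_W}\sigma^2\log T\sum_{j\ne i_W}1/\Delta_{j,i_W}\bigr)$, and therefore $a\sqrt b=\mathcal{O}\bigl(\sigma^2\log T\,\sqrt{K\,\Delta_{K,i_W}\sum_{j\ne i_W}1/\Delta_{j,i_W}}\bigr)$.

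Finally I would multiply through by $\frac{K-1}{4\Delta_{K,i_W}^2}=\mathcal{O}(K/\Delta_{K,i_W}^2)$ and collect the three resulting terms: the $4a^2$ term gives $\frac{K\sigma^2\log T}{\Delta_{K,i_W}^2}\cdot K$, the $b$ term gives $\frac{K\sigma^2\log T}{\Delta_{K,i_W}^2}\cdot\sum_{j\ne i_W}\frac{\Delta_{K,i_W}}{\Delta_{j,i_W}}$, and the $4a\sqrt b$ term gives $\frac{K\sigma^2\log T}{\Delta_{K,i_W}^2}\cdot\sqrt{K\sum_{j\ne i_W}\frac{\Delta_{K,i_W}}{\Delta_{j,i_W}}}$, whose sum is precisely the claimed bound. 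The assertion that the user pulls the target arm $T-|\mathcal{C}|$ times is already recorded in Theorem~\ref{thm:ab4} and is immediate from Algorithm~\ref{alg:Framwork}: the attacker acts only on rounds where the user pulls a non-target arm, so on each of the remaining $T-|\mathcal{C}|$ rounds the user pulls arm $K$ and no attack occurs. There is no genuine obstacle; the only points requiring care are keeping the cross term in the first estimate and checking that the logarithmic factors collapse to $\mathcal{O}(\log T)$ via the lower bound on $T$.
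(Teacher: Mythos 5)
Your proposal is correct and is essentially the argument the paper intends: the paper gives no separate proof of Corollary~\ref{cor:ab}, treating it as a direct simplification of the explicit bound in Theorem~\ref{thm:ab4}, which is exactly what you carry out (collapsing the nested radical via $\sqrt{a^2+b}\le a+\sqrt{b}$, absorbing the logarithms into $\mathcal{O}(\log T)$ using $T\ge(\pi^2K/3\delta)^{2/5}$, and matching the three resulting terms to the three summands). Your side remark that the cross term $4a\sqrt{b}$ is in fact dominated by the other two up to constants is also accurate, and correctly identifies why retaining it is only needed to reproduce the stated form rather than for tightness.
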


From Corollary~\ref{cor:ab}, we can see that the attack cost scales as $\log T$. Two important constants $\frac{\sigma}{\Delta_{K,i_W}}$ and $\sum_{j \neq i_W}\frac{\Delta_{K,i_W}}{\Delta_{j,i_W}}$ have impact on the prelog factor. In Section \ref{sec4}, we provide some numerical examples to illustrate the effects of these two constants on the attack cost. 

\subsection{Attacks fail when the target arm is the worst arm} \label{sec:fail}
One weakness of our LCB attack strategy is that the attack target arm is necessarily a non-worst arm. In the LCB attack strategy, the attacker can not force the user to pull the worst arm very frequently by spending only logarithmic cost. The main reason is that, when the target arm is the worst, the average reward of each arm is larger or equal to that of the target arm. As the result, our attack scheme is not able to ensure that the target arm has a higher expected reward than the user's estimate of the rewards of other arms. In fact, the following theorem shows that  all action-manipulation attack can not manipulate the UCB algorithm into pulling the worst arm more than $T - \mathcal{O}(\log(T))$ by spending only logarithmic cost. 

\begin{theorem}\label{thm:converse}
Let $\delta < \frac{1}{2}$. Suppose the attack cost is limited by $\mathcal{O}(\log(T))$,  there is no attack that can force the UCB algorithm to pick the worst arm more than  $T-\mathcal{O}(T^\alpha)$ times with probability at least $1-\delta$, in which $\alpha < \frac{9}{64K}$.
\end{theorem}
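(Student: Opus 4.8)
The plan is to argue by contradiction. Suppose some action-manipulation attack with cumulative cost $|\mathcal{C}|\le C\log T$ forces the UCB user to pull the worst (target) arm $i_W$ more than $T-c\,T^{\alpha}$ times, for an arbitrary constant $c>0$. Then all other arms together are pulled fewer than $c\,T^{\alpha}$ times, so in particular $N_{i_O}(T)<c\,T^{\alpha}$ for the best arm $i_O$, and $N_{i_W}(t-1)>t-1-c\,T^{\alpha}$ at every round $t$. The intuition is that this behaviour is self-defeating: because $N_{i_W}$ is then of order $T$, UCB's confidence width for $i_W$ is negligible, and since raising $\hat{\mu}_{i_W}$ above $\mu_{i_W}$ costs one unit of budget per redirected pull of $i_W$, a logarithmic budget lifts it only by $O((\log T)/N_{i_W})=o(1)$; hence the UCB index of $i_W$ is $\mu_{i_W}+o(1)$. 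In contrast, $i_O$ is pulled so rarely that its confidence width $3\sigma\sqrt{(\log t)/N_{i_O}(t-1)}$ stays large, and although the attacker may redirect $i_O$'s pulls to depress $\hat{\mu}_{i_O}$, the redirected rewards have mean at least $\mu_{i_W}$, so by Lemma~\ref{thm:equaltoab2} one has $\hat{\mu}_{i_O}(t-1)\ge\mu_{i_W}-\beta(N_{i_O}(t-1))$. Thus the index of $i_O$ stays strictly above that of $i_W$ at every sufficiently late round, so $i_W$ cannot be the arm of largest index there --- contradicting that it is pulled almost always.

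To make this quantitative I would condition on the event $\mathcal{E}_2$ of Lemma~\ref{thm:ab2}, which holds with probability at least $1-\delta$; all estimates below are then deterministic via Lemma~\ref{thm:equaltoab2}. Writing $n=N_{i_O}(t-1)$, the bound $\mu_{I_s^0}\ge\mu_{i_W}$ together with Lemma~\ref{thm:equaltoab2} shows the UCB index of $i_O$ at round $t$ is at least $\mu_{i_W}+\frac{\sigma}{\sqrt n}\big(3\sqrt{\log t}-\sqrt{2K\log(\pi^{2}n^{2}/(3\delta))}\big)$. On the other side, at most $|\mathcal{C}|\le C\log T$ of the $N_{i_W}(t-1)$ pulls of $i_W$ are redirected, each raising the corresponding post-attack mean by at most $\Delta_{i_O,i_W}$, so the average of the post-attack means over $\tau_{i_W}(t-1)$ is at most $\mu_{i_W}+\frac{C\Delta_{i_O,i_W}\log T}{N_{i_W}(t-1)}$; hence by Lemma~\ref{thm:equaltoab2} the UCB index of $i_W$ at round $t$ is at most $\mu_{i_W}+\frac{C\Delta_{i_O,i_W}\log T}{N_{i_W}(t-1)}+\beta(N_{i_W}(t-1))+3\sigma\sqrt{(\log t)/N_{i_W}(t-1)}$.

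Then I would pick any round $t\in(\sqrt T,\,T]$ with $I_t=i_W$; under the contradiction hypothesis such rounds exist (more than $T-c\,T^{\alpha}-\sqrt T$ of them), and each satisfies $n=N_{i_O}(t-1)<c\,T^{\alpha}$ by monotonicity and $N_{i_W}(t-1)\ge\tfrac12\sqrt T$ once $T$ is large (using $\alpha<\tfrac12$). Since UCB selected $i_W$, its index is at least that of $i_O$; but the upper bound above makes $i_W$'s index at most $\mu_{i_W}+O(\sigma\sqrt{K\log T}\,T^{-1/4})$, while for $i_O$, using $n<c\,T^{\alpha}$ and $\log t>\tfrac12\log T$, the factor $3\sqrt{\log t}-\sqrt{2K\log(\pi^{2}n^{2}/(3\delta))}$ is $\ge\sqrt{\log T}\big(\tfrac{3}{\sqrt 2}-\sqrt{4K\alpha}\big)-O(\sqrt K)$, which is $\Omega(\sqrt{\log T})$ exactly because $\alpha<\tfrac{9}{64K}$ forces $4K\alpha<\tfrac{9}{16}$ and hence $\sqrt{4K\alpha}<\tfrac34<\tfrac{3}{\sqrt 2}$, so $i_O$'s index is at least $\mu_{i_W}+\Omega(\sigma\sqrt{\log T}\,T^{-\alpha/2})$. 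Therefore $O(\sigma\sqrt{K\log T}\,T^{-1/4})\ge\Omega(\sigma\sqrt{\log T}\,T^{-\alpha/2})$, i.e.\ $T^{1/4-\alpha/2}=O(\sqrt K)$, which fails for $T$ large since $\tfrac14-\tfrac\alpha2>0$. Hence $i_W$ is never pulled in $(\sqrt T,T]$, so $N_{i_W}(T)\le\sqrt T<T-c\,T^{\alpha}$ for $T$ large, contradicting the hypothesis; and since $c>0$ was arbitrary, no attack of logarithmic cost can make UCB pull the worst arm more than $T-\Omega(T^{\alpha})$ times.

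I expect the genuinely delicate point to be the regime where $n=N_{i_O}(t-1)$ is only of order $\log T$ or smaller at the round under consideration. There one cannot claim $\hat{\mu}_{i_O}$ is close to $\mu_{i_O}$, since a handful of redirected pulls can drag it down toward $\mu_{i_W}$, so the argument must live entirely off the inflated confidence width $3\sigma\sqrt{(\log t)/n}$ and one must verify it dominates the fluctuation $\beta(n)=\sqrt{2\sigma^{2}K\log(\cdots)/n}$ of the user's corrupted estimate. This comparison of the UCB exploration constant $3$ against the $\sqrt{2K}$ inside $\beta$, together with needing the chosen round to be late enough while still plentiful, is what determines the admissible range $\alpha<\tfrac{9}{64K}$; a sharper accounting would likely permit a somewhat larger exponent.
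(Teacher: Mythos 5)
Your proposal is correct and follows essentially the same route as the paper's proof: condition on the event $\mathcal{E}_2$ of Lemma~\ref{thm:ab2}, compare the two UCB indices at a round where the worst arm is selected, use the logarithmic cost budget to cap the inflation of $\hat{\mu}_{i_W}$ to $\mathcal{O}((\log T)/N_{i_W})$, and play the exploration bonus $3\sigma\sqrt{(\log t)/N_i}$ of a rarely pulled arm against the corrupted-estimate width $\beta(N_i)$, which is exactly the comparison that produces the exponent $\tfrac{9}{64K}$ in the paper (via its condition $N_i(t-1)<\frac{\sqrt{3\delta}}{\pi}t^{9/(64K)}$). The only differences are cosmetic: you specialize to $i_O$ and a round $t>\sqrt{T}$ to obtain a direct contradiction, whereas the paper phrases the same dichotomy as a trichotomy of lower bounds on $N_i(t-1)$.
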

\begin{proof}
	Please refer to Appendix~\ref{app:thm2}.
\end{proof}

This theorem shows a contrast between the case where the target arm is not the worst arm and the case where the target arm is the worst arm. If the target arm is not the worst arm, our scheme is able to force the user to pick the target arm $T-\mathcal{O}(\log(T))$ times with only logarithmic cost. On the other hand, if the target arm is the worst, Theorem~\ref{thm:converse} shows that there is no attack strategy that can force the user to pick the worst arm more than $T-\mathcal{O}(T^\alpha)$ times while incurring only logarithmic cost.

\section{Robust algorithm and regret analysis}\label{sec:defend}

The results in Section~\ref{sec3} exposes a significant security threat of the action-manipulation attacks on MABs. Under only $\mathcal{O}(\log(T))$ times of attacks carried out using the proposed LCB strategy, the UCB algorithm will almost always pull the target arm selected by the attacker. Although there are some defense algorithms~\cite{lykouris2018stochastic} and universal best arm identification schemes~\cite{8772199} for stochastic or adversarial bandit, they do not apply to action-manipulation attack setting. This motivates us to design a new bandit algorithm that is robust against action-manipulation attacks. In this section, we propose such a robust bandit algorithm and analyze its regret.

\subsection{Robust Bandit algorithm} \label{sec:ba}
In this section, we assume that a valid upper bound $A$ for the cumulative attack cost $ |\mathcal{C}|$ is known for the user, although the user do not have to know the exact cumulative attack cost $ |\mathcal{C}|$. $A$ does not need to be constant, it can scale with $T$. In other words, for a given $A$, our proposed algorithm is robust to all action-manipulation attacks with a cumulative attack cost $ |\mathcal{C}|<A$. This assumption is reasonable, as if the cost is unbounded, it will not be possible to design a robust scheme.

We first introduce some notation. Denote $\mathbf{N}(t-1):=(N_1(t-1),\dots,N_K(t-1))$ as the vector counting how many times each action has been taken by the user, and $ \boldsymbol{\hat{\mu}}(t-1)=(\hat{\mu}_1(t-1),\dots,\hat{\mu}_K(t-1)) $ as the vector of the sample means computed by the user. The proposed algorithm is a modified UCB method by taking the maximum possible mean estimate offset due to attack into consideration. We name our scheme as maximum offset UCB (MOUCB).

The proposed MOUCB works as follows. In the first $2AK$ rounds, MOUCB algorithm pulls each arm $2A$ times. After that, at round $t$, the user chooses an arm $I_t$ by a modified UCB method:
\begin{equation} \label{eq:bandit}
  \begin{split}
I_t=\arg \max_a &\left\{ \hat{\mu}_a(t-1)+\beta(N_a(t-1)) \right.\\
&\left.\hspace{5mm}+\gamma(\boldsymbol{\hat{\mu}}(t-1),\mathbf{N}(t-1)) \right\},
  \end{split}
\end{equation}
where
\begin{eqnarray}
&&\hspace{-5mm}\gamma(\boldsymbol{\hat{\mu}}(t-1),\mathbf{N}(t-1))=\frac{2A}{N_a(t-1)}\nonumber\\&&\hspace{-7mm} \max_{i,j}\left\{\hat{\mu}_i(t-1) -\hat{\mu}_j(t-1)  +\beta(N_i(t-1))+\beta(N_j(t-1)) \right\},\nonumber
\end{eqnarray}
and
\begin{equation}
  \begin{split}
\beta(N)=\sqrt{\frac{2\sigma^2K}{N}\log\frac{\pi^2 N^2}{3\delta}}.
  \end{split}
\end{equation}

The algorithm is summarized in Algorithm~\ref{alg:banditalgorithm}.

\begin{algorithm}[htb] 
\caption{Proposed MOUCB bandit algorithm} 
\label{alg:banditalgorithm} 
\begin{algorithmic}[1] 
\REQUIRE ~~\\ 
A valid upper bound $A$ for the cumulative attack cost.
\FOR{$t = 1, 2, \dots$}
\IF{$t \le 2AK$}
\STATE The user pulls the arm whose pull count is the smallest, i.e. $I_t=\arg \min_i N_i(t-1)$.\
\ELSE 
\STATE The user chooses arm $I_t$ to pull according \eqref{eq:bandit}.
\ENDIF
\IF{The attacker decides to attack}
\STATE The attacker attacks and changes $I_t$ to $I_t^0$.
\ELSE
\STATE The attacker does not attack and $I_t^0=I_t$.
\ENDIF
\STATE The environment generates reward $r_t$ from arm $I_t^0$.
\STATE The attacker and the user receive $r_t$.
\ENDFOR
\end{algorithmic}
\end{algorithm}


Compared with the original UCB algorithm in~\eqref{eq:UCB}, the main difference is the additional term $\gamma(\boldsymbol{\hat{\mu}}(t-1),\mathbf{N}(t-1))$ in~\eqref{eq:bandit}. We now highlight the main idea why our bandit algorithm works and the role of this additional term. In particular, in the standard multi-armed stochastic bandit problem, $\hat{\mu}_i(t)$ is a proper estimation of $\mu_i$, the true mean reward of arm $i$. However, under the action-manipulation attacks, as the user does not know which arm is used to generate $r_t$, $\hat{\mu}_i(t)$ is not a proper estimate of the true mean reward anymore. However, we can try to find a good bound of the true mean reward. If we know $\Delta_{i_O,i_W}$, the reward difference between the optimal arm and the worst arm, we can describe the maximum offset of the mean rewards caused by the attack. In particular, we have
\begin{equation}
\mu_i - \frac{A}{N_i(t)}\Delta_{i_O,i_W} \le  \frac{1}{N_i(t)}\sum_{s \in \tau_i(t)} \mu_{I_s^0} \le \mu_i + \frac{A}{N_i(t)}\Delta_{i_O,i_W},
\end{equation}
which implies
\begin{equation}
\mu_i \le  \frac{A}{N_i(t)}\Delta_{i_O,i_W} + \frac{1}{N_i(t)}\sum_{s \in \tau_i(t)} \mu_{I_s^0}.\label{eq:mou}
\end{equation}

In~\eqref{eq:mou}, the first term in the right hand side is the maximum offset that an attacker can introduce regardless of the attack strategy. The second term in the right hand side is related to the mean estimated by the user. In particular, under event $\mathcal{E}_2$, as shown in Lemma~\ref{thm:equaltoab2}, we have
\begin{eqnarray}
     \frac{1}{N_i(t)}\sum_{s \in \tau_i(t)} \mu_{I_s^0} < \hat{\mu}_i(t) + \beta(N_i(t)).
\end{eqnarray}

Hence, regardless the attack strategy, we have a upper confidence bound on $\mu_i$:
\begin{equation}
\mu_i \le \hat{\mu}_i(t) + \frac{A}{N_i(t)}\Delta_{i_O,i_W} + \beta(N_i(t)).\label{eq:Delta}
\end{equation}
In our case, however, $\Delta_{i_O,i_W}$ is also unknown. In our algorithm, we obtain a high-probability bound on $\Delta_{i_O,i_W}$: \begin{eqnarray}\Delta_{i_O,i_W}\leq 2\max_{i,j}\left\{\hat{\mu}_i-\hat{\mu}_j+\beta\left(N_i(t)\right)+\beta\left(N_j(t)\right) \right\},\label{eq:Deltabound}\end{eqnarray}
which will proved in Lemma~\ref{thm:banditdelta} below. Now, the second term of~\eqref{eq:Delta} becomes $\gamma(\boldsymbol{\hat{\mu}}(t-1),\mathbf{N}(t-1))$ if we replace $\Delta_{i_O,i_W}$ with the bound~\eqref{eq:Deltabound}, and we obtain our final algorithm.



\subsection{Regret analysis} \label{sec:RA}
 Lemma~\ref{thm:banditdelta} shows a boundary of $\Delta_{i_O,i_W}$ the maximum reward difference between any two arms, under event $\mathcal{E}_2$.
\begin{lem}\label{thm:banditdelta}
	For  $\delta \le \frac{1}{3}$, $t>2AK$ and under event $\mathcal{E}_2$, MOUCB algorithm have
\begin{equation}
  \begin{split}
	\Delta_{i_O,i_W} &\le 2\max_{i,j}\left\{\hat{\mu}_i-\hat{\mu}_j+\beta\left(N_i(t)\right)+\beta\left(N_j(t)\right) \right\} \\
	&\le 2\Delta_{i_O,i_W}+8\sqrt{\frac{\sigma^2K}{A}\log\frac{4\pi^2 A^2}{3\delta}}.
	\end{split}
\end{equation}
\end{lem}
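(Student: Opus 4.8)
The inequality to establish is the chain
\[
\Delta_{i_O,i_W}\;\le\; 2\max_{i,j}\bigl\{\hat{\mu}_i-\hat{\mu}_j+\beta(N_i(t))+\beta(N_j(t))\bigr\}\;\le\; 2\Delta_{i_O,i_W}+8\sqrt{\tfrac{\sigma^2K}{A}\log\tfrac{4\pi^2A^2}{3\delta}},
\]
valid for $t>2AK$ under event $\mathcal{E}_2$. The left inequality is the ``soundness'' direction: the empirical gap inflated by the two confidence widths is a valid over-estimate of the true worst-case gap. The right inequality is the ``tightness'' direction: that over-estimate is not too loose. Throughout I would use the key consequence of Lemma~\ref{thm:equaltoab2} that under $\mathcal{E}_2$,
\[
\Bigl|\hat{\mu}_i(t)-\tfrac{1}{N_i(t)}\sum_{s\in\tau_i(t)}\mu_{I_s^0}\Bigr|<\beta(N_i(t))\quad\text{for all }i,
\]
together with the elementary fact that $\tfrac{1}{N_i(t)}\sum_{s\in\tau_i(t)}\mu_{I_s^0}$ is a convex combination of the true means, hence lies in $[\mu_{i_W},\mu_{i_O}]$. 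The phase-one schedule guarantees $N_i(t)\ge 2A$ for every arm once $t>2AK$, which is what makes $\beta(N_i(t))\le\beta(2A)$ and ultimately produces the explicit $\sqrt{\sigma^2K/A\cdot\log(\cdot)}$ term.

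\textbf{Left inequality.} Pick $i=i_O$ and $j=i_W$ inside the maximum, so it suffices to show $\Delta_{i_O,i_W}\le 2(\hat{\mu}_{i_O}-\hat{\mu}_{i_W}+\beta(N_{i_O}(t))+\beta(N_{i_W}(t)))$. Actually a cleaner route: for \emph{any} arm $a$, $\mu_{i_O}\le \tfrac{1}{N_a(t)}\sum_{s\in\tau_a(t)}\mu_{I_s^0}$ need not hold, so instead I argue at the level of the best/worst arms directly. Using the displayed bound, $\hat{\mu}_{i_O}(t) > \tfrac{1}{N_{i_O}(t)}\sum\mu_{I_s^0}-\beta(N_{i_O}(t)) \ge \mu_{i_W}-\beta(N_{i_O}(t))$, and symmetrically $\hat{\mu}_{i_W}(t) < \mu_{i_O}+\beta(N_{i_W}(t))$. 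These two give $\hat{\mu}_{i_O}-\hat{\mu}_{i_W} > \mu_{i_W}-\mu_{i_O}-\beta(N_{i_O}(t))-\beta(N_{i_W}(t))$, which is the wrong sign. The correct pairing is to bound the true gap: $\mu_{i_O}-\mu_{i_W} \le \bigl(\tfrac{1}{N_{i_O}}\sum\mu_{I_s^0}+\beta(N_{i_O})+\text{something}\bigr)-\cdots$; the honest derivation is to note $\mu_{i_O}\le \max_i\bigl(\tfrac{1}{N_i}\sum\mu_{I_s^0}\bigr)+(\text{range correction})$ — but in fact the factor $2$ on the right of the first inequality is exactly there to absorb the fact that the empirical means are shifted combinations. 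I would therefore choose the index pair $(i,j)$ achieving $\tfrac{1}{N_i}\sum_{s\in\tau_i}\mu_{I_s^0}$ maximal and $\tfrac{1}{N_j}\sum_{s\in\tau_j}\mu_{I_s^0}$ minimal; since each such average lies in $[\mu_{i_W},\mu_{i_O}]$ and the target arm (say) can be driven near $\mu_{i_W}$ these averages span a range that, combined with the $\beta$ slacks, dominates $\Delta_{i_O,i_W}$ up to the factor $2$. The main obstacle is precisely nailing down this pairing argument cleanly — which arms' averages are guaranteed to be separated, and why the factor $2$ (rather than $1$) is needed — so I would allocate most of the effort there.

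\textbf{Right inequality.} This direction is the routine one. For every pair $(i,j)$ with $t>2AK$ we have, via Lemma~\ref{thm:equaltoab2} and convexity,
\[
\hat{\mu}_i-\hat{\mu}_j+\beta(N_i(t))+\beta(N_j(t)) < \Bigl(\tfrac{1}{N_i}\sum\mu_{I_s^0}+\beta(N_i)\Bigr)-\Bigl(\tfrac{1}{N_j}\sum\mu_{I_s^0}-\beta(N_j)\Bigr)+\beta(N_i)+\beta(N_j)\le \Delta_{i_O,i_W}+2\beta(N_i(t))+2\beta(N_j(t)),
\]
and since $N_i(t),N_j(t)\ge 2A$ and $\beta$ is decreasing, $\beta(N_i(t))+\beta(N_j(t))\le 2\beta(2A)$. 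Multiplying by $2$ yields $2\Delta_{i_O,i_W}+8\beta(2A)$, and substituting $\beta(2A)=\sqrt{\tfrac{2\sigma^2K}{2A}\log\tfrac{\pi^2(2A)^2}{3\delta}}=\sqrt{\tfrac{\sigma^2K}{A}\log\tfrac{4\pi^2A^2}{3\delta}}$ gives exactly the stated $8\sqrt{\sigma^2K/A\cdot\log(4\pi^2A^2/3\delta)}$. The condition $\delta\le\tfrac13$ is used only to keep the logarithm positive (so $\beta$ is real and the bounds are meaningful) when $A$ is small. I would close by remarking that taking the max over $(i,j)$ on the left preserves the inequality since the bound on the right is pair-independent.
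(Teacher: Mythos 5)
Your right-hand inequality is correct and matches the paper's argument essentially verbatim: apply Lemma~\ref{thm:equaltoab2} to each arm, note that each average $\tfrac{1}{N_i(t)}\sum_{s\in\tau_i(t)}\mu_{I_s^0}$ is a convex combination of true means and hence the difference of any two such averages is at most $\Delta_{i_O,i_W}$, and then use $N_i(t)\ge 2A$ together with the monotonicity of $\beta$ to get $8\beta(2A)$, which simplifies to the stated radical. (One small misattribution: the hypothesis $\delta\le\tfrac13$ is what guarantees $\beta$ is monotonically decreasing, not merely that the logarithm is positive; you invoke the monotonicity, so you do need it for that purpose.)

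The left-hand inequality, however, is a genuine gap: you try the pairing $(i_O,i_W)$, correctly observe that the naive confidence-interval bounds come out with the wrong sign, and then gesture at an unspecified ``pairing argument'' over the arms with maximal and minimal post-attack averages --- which you yourself admit you cannot nail down. That argument cannot work as stated: without further input, all the averages $\tfrac{1}{N_i(t)}\sum_{s\in\tau_i(t)}\mu_{I_s^0}$ could in principle collapse toward a single value, so nothing forces any pair of them to be separated by $\Delta_{i_O,i_W}/2$. The missing ingredient is the \emph{attack budget combined with the forced exploration phase}. Let $C_i$ be the number of attacked rounds among the pulls of arm $i$; then $C_{i_O}+C_{i_W}\le|\mathcal{C}|\le A$, while the first $2AK$ rounds guarantee $N_{i_O}(t),N_{i_W}(t)\ge 2A$. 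Hence at most a fraction $C_{i_O}/(2A)$ of arm $i_O$'s pulls were corrupted, each corruption costing at most $\Delta_{i_O,i_W}$, so
\begin{equation}
\frac{1}{N_{i_O}(t)}\sum_{s\in\tau_{i_O}(t)}\mu_{I_s^0}\;\ge\;\mu_{i_O}-\Delta_{i_O,i_W}\frac{C_{i_O}}{2A},
\qquad
\frac{1}{N_{i_W}(t)}\sum_{s\in\tau_{i_W}(t)}\mu_{I_s^0}\;\le\;\mu_{i_W}+\Delta_{i_O,i_W}\frac{C_{i_W}}{2A},
\end{equation}
and subtracting gives a separation of at least $\Delta_{i_O,i_W}\bigl(1-\tfrac{C_{i_O}+C_{i_W}}{2A}\bigr)\ge\tfrac{\Delta_{i_O,i_W}}{2}$; this is precisely where the factor $2$ comes from. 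Lemma~\ref{thm:equaltoab2} then upper-bounds this difference by $\hat{\mu}_{i_O}-\hat{\mu}_{i_W}+\beta(N_{i_O}(t))+\beta(N_{i_W}(t))$, which is at most the maximum over all pairs. Your proposal identifies the hard direction but does not supply this budget-versus-exploration argument, so as written the proof is incomplete.
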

\begin{proof}
	Please refer to Appendix~\ref{app:lemma5}.
\end{proof}

Using Lemma~\ref{thm:banditdelta}, we now bound the regret of Algorithm~\ref{alg:banditalgorithm}.

\begin{theorem}\label{thm:proposed}
Let $A$ be an upper bound on the total attack cost $ |\mathcal{C}|$. For $\delta \le \frac{1}{3}$ and $T \ge 2AK$, MOUCB algorithm has pseudo-regret $\bar{R}(T)$ 
\begin{equation}
  \begin{split}
\bar{R}(T) \le \sum_{a \not= i_O} \max&\left\{\frac{8\sigma^2K}{\Delta_{i_O,a}}\log\frac{\pi^2T^2}{3\delta}, 
A \left( \Delta_{i_O,a} \right.\right.\\
&\left.\left.+ 2\Delta_{i_O,i_W} + 8\sqrt{\frac{\sigma^2K}{A}\log\frac{4\pi^2 A^2}{3\delta}}\right)\right\},
\end{split}
\end{equation}
with probability at least $1-\delta$.
\end{theorem}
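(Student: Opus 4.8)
The plan is to bound the expected number of times each suboptimal arm $a \neq i_O$ is pulled after the initial exploration phase, and then convert this into a regret bound via the standard decomposition $\bar{R}(T) = \sum_{a \neq i_O} \Delta_{i_O,a}\,\mathbb{E}[N_a(T)]$. First I would condition on the event $\mathcal{E}_2$, which holds with probability at least $1-\delta$, so that all the confidence-bound statements from Lemma~\ref{thm:equaltoab2} and Lemma~\ref{thm:banditdelta} are available simultaneously. Under $\mathcal{E}_2$, combining~\eqref{eq:Delta} with the bound~\eqref{eq:Deltabound} on $\Delta_{i_O,i_W}$ shows that the modified index $\hat{\mu}_a(t-1)+\beta(N_a(t-1))+\gamma(\boldsymbol{\hat{\mu}}(t-1),\mathbf{N}(t-1))$ used in~\eqref{eq:bandit} is a genuine high-probability upper bound on $\mu_a$ for every arm $a$; in particular it upper-bounds $\mu_{i_O}$ when evaluated at $a=i_O$.

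Next I would run the usual UCB-style contradiction argument: if arm $a$ is selected at round $t>2AK$, then its modified index is at least that of $i_O$, which (using that $i_O$'s index is $\ge \mu_{i_O}$) forces
\[
\hat{\mu}_a(t-1)+\beta(N_a(t-1))+\gamma(\boldsymbol{\hat{\mu}}(t-1),\mathbf{N}(t-1)) \ge \mu_{i_O}.
\]
On the other hand, from~\eqref{eq:Delta} applied to arm $a$ we have $\hat{\mu}_a(t-1) \le \mu_a + \frac{A}{N_a(t-1)}\Delta_{i_O,i_W}+\beta(N_a(t-1))$, and the $\gamma$ term is at most $\frac{2A}{N_a(t-1)}$ times the upper bound $2\Delta_{i_O,i_W}+8\sqrt{\frac{\sigma^2K}{A}\log\frac{4\pi^2A^2}{3\delta}}$ from Lemma~\ref{thm:banditdelta}. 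Substituting these in and rearranging gives an inequality of the form
\[
\Delta_{i_O,a} \le 2\beta(N_a(t-1)) + \frac{A}{N_a(t-1)}\left(\Delta_{i_O,i_W} + 2\left(2\Delta_{i_O,i_W}+8\sqrt{\tfrac{\sigma^2K}{A}\log\tfrac{4\pi^2A^2}{3\delta}}\right)\right),
\]
which bounds $N_a(t-1)$ from above: either the $\beta$ term dominates, giving $N_a(t-1) \lesssim \frac{8\sigma^2K}{\Delta_{i_O,a}^2}\log\frac{\pi^2 T^2}{3\delta}$ after plugging in the definition of $\beta$ and using $N_a(t-1)\le T$ inside the logarithm, or the $A/N_a$ term dominates, giving $N_a(t-1) \lesssim \frac{A}{\Delta_{i_O,a}}\left(\Delta_{i_O,a}+2\Delta_{i_O,i_W}+8\sqrt{\tfrac{\sigma^2K}{A}\log\tfrac{4\pi^2A^2}{3\delta}}\right)$ after a mild constant adjustment. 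Taking the maximum of the two candidate bounds and multiplying by $\Delta_{i_O,a}$ yields exactly the two terms appearing inside the $\max$ in the theorem statement; summing over $a \neq i_O$ completes the argument.

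The main obstacle I anticipate is handling the self-referential nature of the $\gamma$ term cleanly: $\gamma$ depends on the empirical means of \emph{all} arms, not just arm $a$, so to get a clean per-arm bound one must uniformly control $\max_{i,j}\{\hat{\mu}_i-\hat{\mu}_j+\beta(N_i)+\beta(N_j)\}$ — this is precisely what Lemma~\ref{thm:banditdelta} delivers, but one has to be careful that the lemma's lower bound $\Delta_{i_O,i_W}$ and upper bound are used in the right directions (the lower bound to make the index valid, the upper bound to control the inflation). A secondary technical point is the treatment of the forced-exploration phase: since every arm is pulled $2A$ times in the first $2AK$ rounds, these contribute at most $2A\sum_{a\neq i_O}\Delta_{i_O,a}$ to the regret, and one should check this is absorbed into the stated bound (it is, because $A\Delta_{i_O,a}$ already appears inside each $\max$). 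Finally, since the $\beta(N_a(t-1))$ bound inside the logarithm is evaluated at the random $N_a$, I would use monotonicity to replace it with $T$, which is standard but worth stating explicitly.
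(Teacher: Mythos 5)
Your proposal follows essentially the same route as the paper's proof: condition on $\mathcal{E}_2$, use Lemma~\ref{thm:banditdelta} to sandwich the $\gamma$ term so that the modified index is a valid upper confidence bound on $\mu_{i_O}$, run the standard UCB elimination argument to bound $N_a(T)$ by the maximum of two terms, and then multiply by the gaps and sum over $a\neq i_O$. The only caveat is a constant-factor slip: Lemma~\ref{thm:banditdelta} upper-bounds $2\max_{i,j}\{\cdot\}$, not $\max_{i,j}\{\cdot\}$, so in fact $\gamma \le \tfrac{A}{N_a}\bigl(2\Delta_{i_O,i_W}+8\sqrt{\tfrac{\sigma^2K}{A}\log\tfrac{4\pi^2A^2}{3\delta}}\bigr)$ rather than twice that, and the paper bounds the attack-induced offset of $\hat{\mu}_a$ by $\tfrac{A}{N_a}\Delta_{i_O,a}$ rather than your looser $\tfrac{A}{N_a}\Delta_{i_O,i_W}$; with these two tighter readings your chain of inequalities reproduces the theorem's exact constants instead of matching only ``after a mild constant adjustment.''
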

\begin{proof}
	Please refer to Appendix~\ref{app:thm3}.
\end{proof}

Theorem~\ref{thm:proposed} reveals that our bandit algorithm is robust to the action-manipulation attacks. If the total attack cost is bounded by $\mathcal{O}(\log T)$, the pseudo-regret of MOUCB bandit algorithm is still bounded by $\mathcal{O}(\log T)$. This is in contrast with UCB, for which we have shown that the pseudo-regret is $\mathcal{O}(T)$ with attack cost $\mathcal{O}(\log T)$ in Section~\ref{sec3}. If the total attack cost is up to $\Omega(\log T)$, the pseudo-regret of MOUCB bandit algorithm is bounded by $\mathcal{O}(A)$, which is linear in $A$.

\section{Numerical results}\label{sec4}
In this section, we provide numerical examples to illustrate the analytical results obtained. In our simulation, the bandit has 10 arms. The rewards distribution of arm $i$ is $\mathcal{N}(\mu_i, \sigma)$. The attacker's target arm is $K$. We let $\delta=0.05$. We then run the experiment for multiple trials and in each trial we run $T=10^7$ rounds.

\subsection{LCB attack strategy}

We first illustrate the impact of the proposed LCB attack strategy on UCB algorithm.

\begin{figure}[ht]
\centering
\includegraphics[width=0.8\linewidth]{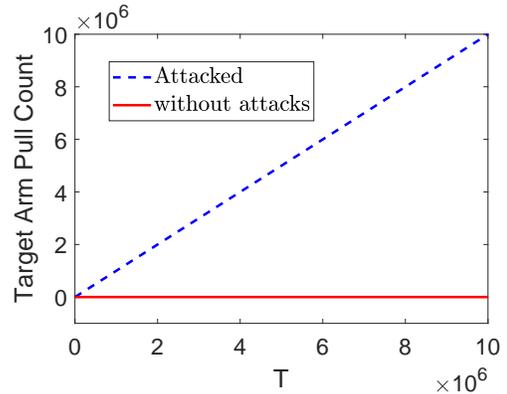}
\caption{Number of rounds the target arm was pulled}
\label{fig:num}
\end{figure}

In Figure \ref{fig:num}, we fix $\sigma = 0.1$ and $\Delta_{K,i_W}=0.1$ and compare the number of rounds at which the target arm is pulled with and without attack. In this experiment, the mean rewards of all arms are 1.0, 0.9, 0.8, 0.7, 0.6, 0.5, 0.4, 0.3, 0.1, and 0.2 respectively. Arm $K$ is not the worst arm, but its average reward is lower than most arms. The results are averaged over 20 trials. The attacker successfully manipulates the user into pulling the target arm very frequently.

\begin{figure}[ht]
\centering
\includegraphics[width=0.8\linewidth]{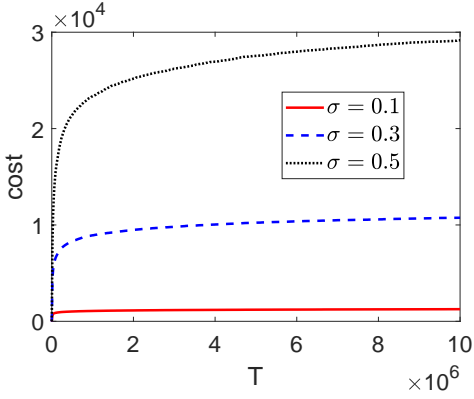}
\caption{Attack cost vs $\frac{\sigma}{\Delta_{K,i_W}}$}
\label{fig:sigma}
\end{figure}

In Figure \ref{fig:sigma}, in order to study how $\frac{\sigma}{\Delta_{K,i_W}}$ affects the attack cost, we fix $\Delta_{K,i_W}=0.1$ and set $\sigma$ as 0.1, 0.3 and 0.5 respectively. The mean rewards of all arms are same as above. From the figure, we can see that as $\frac{\sigma}{\Delta_{K,i_W}}$ increases, the attack cost increases. In addition, as predicted in our analysis, the attack cost increases with $T$, the total number of rounds, in a logarithmic order.

\begin{figure}[ht]
\centering
\includegraphics[width=0.8\linewidth]{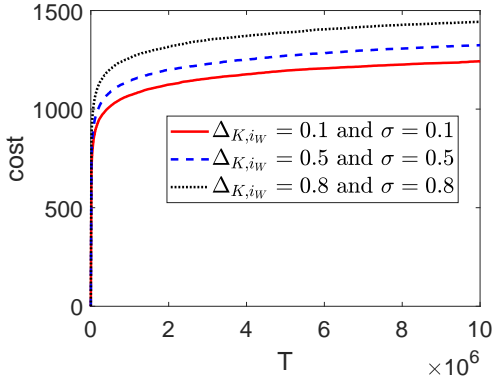}
\caption{Attack cost vs $\sum_{j \neq i_W}\frac{\Delta_{K,i_W}}{\Delta_{j,i_W}}$}
\label{fig:Delta}
\end{figure}

Figure \ref{fig:Delta} illustrates how $\sum_{j \neq i_W}\frac{\Delta_{K,i_W}}{\Delta_{j,i_W}}$ affects the attack cost. In this experiment, we fix $\frac{\sigma}{\Delta_{K,i_W}}=1$ and set $\Delta_{K,i_W}$ as 0.2, 0.6 and 0.9 respectively. The mean rewards of all arms are the same as above. The figure illustrates that, as $\sum_{j \neq i_W}\frac{\Delta_{K,i_W}}{\Delta_{j,i_W}}$ increases, the attack cost also increases. This is consistent with our analysis in Corollary~\ref{cor:ab}.

\subsection{MOUCB bandit algorithm}

We now illustrate the effectiveness of MOUCB bandit algorithm.

In this experiment, we use the similar setting as in the simulation of the LCB attack scheme. The mean rewards of all arms are set to be 1.0, 0.8, 0.9, 0.5, 0.2, 0.3, 0.1, 0.4, 0.7, and 0.6 respectively. The total attack cost $ |\mathcal{C}|$ is limited by 2000. A given valid upper bound for total attack cost is $A = 3000$. The results are averaged over 20 trials.

\begin{figure}[htbp]
\centering
\includegraphics[width=0.8\linewidth]{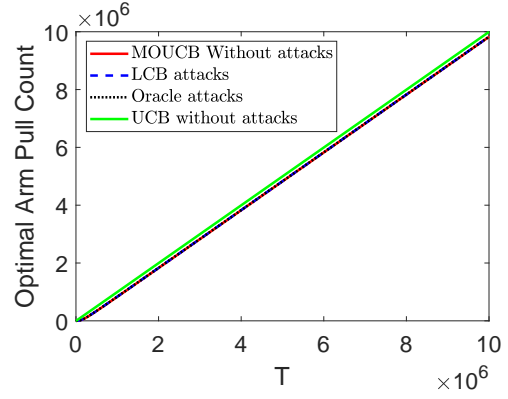}
\caption{Comparison of number of rounds the optimal arm was pulled }
\label{fig:numour}
\end{figure}
In Figure \ref{fig:numour}, we simulate MOUCB algorithm with two different attacks, and compare the numbers of rounds when the optimal arm is pulled under these attacks. The first attack is the LCB attack discussed in Section~\ref{sec3}. The second attack is the oracle attack, in which the attacker knows the true mean reward of arms and implements the oracle attacks that change any non-target arm to a worst arm (see the discussion in Section~\ref{sec2}). For comparison purposes, we also add the curve for MOUCB under no attack, and the curve for UCB under no attack. The results show that, even under the oracle attack, the proposed MOUCB bandit algorithm achieves almost the same performance as the UCB without attack. 

\begin{figure}[htbp]
\centering
\includegraphics[width=0.8\linewidth]{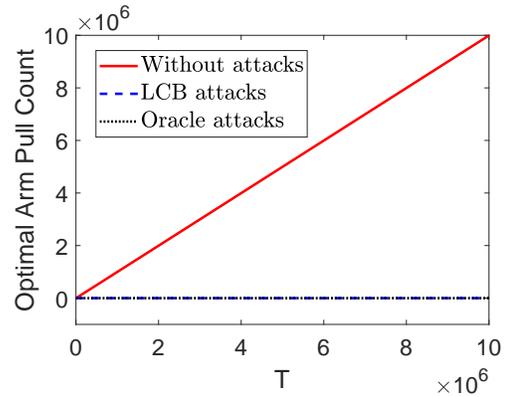}
\caption{Number of rounds the optimal arm was pulled using UCB algorithm}
\label{fig:numucb}
\end{figure}
To further compare the performance of UCB and MOUCB, in Figure \ref{fig:numucb}, we illustrate the performance of UCB algorithm for the three scenarios discussed above: under LCB attack, under oracle attack and under no attack. The results show that both LCB and oracle attacks can successfully manipulates the UCB algorithm into pulling a non-optimal arm very frequently, as the curves for the LCB attack and oracle attack are far away from the curve for no attack. This is in sharp contrast with the situation for MOUCB algorithm shown in Figure~\ref{fig:numour}, where the all curves are almost identical. 

\begin{figure}[htbp]
\centering
\includegraphics[width=0.8\linewidth]{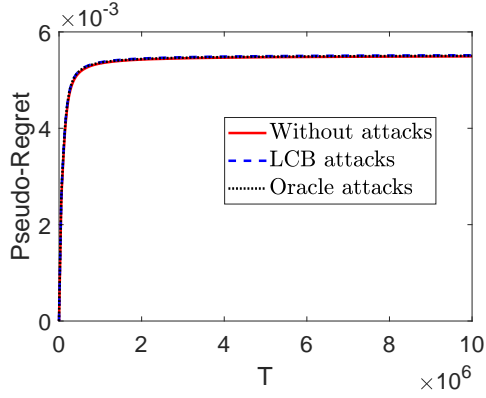}
\caption{Pseudo-regret of MOUCB algorithm}
\label{fig:regour}
\end{figure}

\begin{figure}[htbp]
\centering
\includegraphics[width=0.8\linewidth]{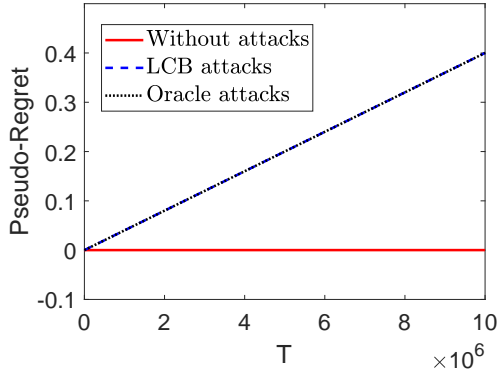}
\caption{Pseudo-regret of UCB algorithm}
\label{fig:regucb}
\end{figure}

 Figure \ref{fig:regour} and Figure \ref{fig:regucb} illustrate the pseudo-regret of MOUCB bandit algorithm and UCB bandit algorithm respectively. In Figure \ref{fig:regour}, as predicted in our analysis, MOUCB algorithm archives logarithmic pseudo-regrets under both LCB attacks and the oracle attacks. Furthermore, the curves under both attacks are very close to that of the case without attacks. However, as shown in Figure \ref{fig:regucb}, the pseudo-regret of UCB grows linearly under both attacks, while grows logarithmically under no attack. The figures again show that UCB is vulnerable to action-manipulation attacks while the proposed MOUCB is robust to the attacks (even for oracle attacks). 
 

\section{Conclusion}\label{sec5}
In this paper, we have introduced a new class of attacks on stochastic bandits: action-manipulation attacks. We have analyzed the attack against on the UCB algorithm and proved that the proposed LCB attack scheme can force the user to almost always pull a non-worst arm with only logarithm effort. 
To defend against this type of attacks, we have further designed a new bandit algorithm MOUCB that is robust to action-manipulation attacks. We have analyzed the regret of MOUCB under any attack with bounded cost, and have showed that the proposed algorithm is robust to the action-manipulation attacks. 
\appendices

\section{Proof of Lemma~\ref{thm:ab2}}\label{app:lemma2}
The proof is similar with the proof of Lemma~\ref{thm:ab} that was proved in~\cite{jun2018adversarial}. Let $\{ X_j \}_{j=1}^\infty$ be a sequence of $i.i.d$ $\sigma^2$-sub-Gaussian random variables with mean $\mu$. Let $\hat{\mu}^0(t)=\frac{1}{N(t)}\sum_{j=1}^{N(t)}X_j$. By Hoeffding's inequality.
\begin{equation}
\mathbb{P}(|\hat{\mu}^0(t) - \mu|\ge \eta) \le 2 \exp \left(-\frac{N(t)\eta^2}{2\sigma^2} \right).
\end{equation}
In order to ensure that $\mathcal{E}_2$ holds for all arm $i$, all arm $j$ and all pull counts $N = N_{i,j}(t)$, we set $\delta_{i,j,N}:=\frac{6\delta}{\pi^2K^2N^2}$. We have
\begin{equation}
\begin{split}
&\mathbb{P}\left(\exists i, \exists j, \exists N:|\hat{\mu}_{i,j}(t) - \mu_j| \ge  \sqrt{\frac{2\sigma^2}{N}\log\frac{\pi^2 K^2 N^2}{3\delta}} \right)\\
& = \sum_{i=1}^{K} \sum_{j=1}^K \sum_{N=1}^\infty \delta_{i,N} = \delta. 
\end{split}
\end{equation}

\section{Proof of Lemma~\ref{thm:equaltoab2}}\label{app:lemma3}

According to event $\mathcal{E}_2$, we have

\begin{equation}
\begin{split}
&\left| \hat{\mu}_i(t) - \frac{1}{N_i(t)}\sum_{s \in \tau_i(t)} \mu_{I_s^0}\right| \\
= & \left| \sum_{j=1}^K\frac{N_{i,j}(t)}{N_i(t)}\left(\hat{\mu}_{i,j}(t)-\mu_j \right)\right|\\
\le & \sum_{j=1}^K \frac{N_{i,j}(t)}{N_i(t)}\left|\hat{\mu}_{i,j}(t)-\mu_j \right| \\
< &  \frac{1}{N_i(t)} \sum_{j=1}^K \sqrt{2\sigma^2 N_{i,j}(t)\log\frac{\pi^2 K^2 (N_{i,j}(t))^2}{3\delta}}.
\end{split}
\end{equation}

Define a function $\displaystyle f(N) = \sqrt{2\sigma^2 N\log\frac{\pi^2 K^2 N^2}{3\delta}}: \mathbb {R} \to \mathbb {R}$, and we have
\begin{equation}
\begin{split}
\displaystyle f''(N) =& \frac{\partial^2}{\partial N^2}\sqrt{2\sigma^2 N\log\frac{\pi^2 K^2 N^2}{3\delta}} \\
=&-\frac{\left(2\sigma^2 \log\frac{\pi^2 K^2 N^2}{3\delta}\right)^2+16\sigma^4}{4\left(2\sigma^2 N\log\frac{\pi^2 K^2 N^2}{3\delta}\right)^{\frac{3}{2}}}\\
<&0,
\end{split}
\end{equation}
when $N \ge 1$.

Hence $\displaystyle f$ is strictly concave when $N \ge 1$, and we have
\begin{equation} 
\begin{split}
\displaystyle \sum_{j=1}^K f(N_{i,j}(t)) < K f\left(\frac{1}{K}\sum_{j=1}^K N_{i,j}(t)\right) = K f\left(\frac{N_i(t)}{K}\right).
\end{split}
\end{equation}

Thus,
\begin{equation} \label{eq:l3}
\begin{split}
&\left| \hat{\mu}_i(t) - \frac{1}{N_i(t)}\sum_{s \in \tau_i(t)} \mu_{I_s^0}\right| \\
<&\frac{1}{N_i(t)} K \sqrt{2\sigma^2 \frac{N_i(t)}{K}\log\frac{\pi^2 K^2 \left(\frac{N_i(t)}{K}\right)^2}{3\delta}}\\
=&\sqrt{\frac{2\sigma^2K}{N_i(t)}\log\frac{\pi^2 (N_i(t))^2}{3\delta}}.
\end{split}
\end{equation}

\section{Proof of Lemma~\ref{thm:ab3}}\label{app:Lemma4}
The LCB attack scheme uses lower confidence bound to exploit the worst arm, so we need to prove that the attacker's pull counts of all non-worst arms should be limited at round $t$. 

Consider the case that in round $t+1$, the user chooses a non-target arm $I_{t+1} = i \ne K$ and the attacker changes it to a non-worst arm $I_{t+1}^0 = j \ne i_W$. On one hand, under event $\mathcal{E}_1$, we have
\begin{equation} 
\begin{split}
&\hat{\mu}_{i_W}^0(t) - \mu_{i_W}< \textbf{CB}(N_{i_W}^0(t)),\\
\text{and} \ &\hat{\mu}_j^0(t) - \mu_j> -\textbf{CB}(N_j^0(t)).
\end{split} \label{eq:fromevent1}
\end{equation}

On the other hand, according to the attack scheme, it must be the case that
\begin{equation} 
\begin{split}
\hat{\mu}_{i_W}^0(t) - \textbf{CB}(N_{i_W}^0(t)) > \hat{\mu}_j^0(t) -\textbf{CB}(N_j^0(t)),
\end{split}
\end{equation}
which is equivalent to
\begin{equation} 
\begin{split}
\textbf{CB}(N_j^0(t)) > \hat{\mu}_j^0(t) - (\hat{\mu}_{i_W}^0(t) - \textbf{CB}(N_{i_W}^0(t)) ).\label{eq:cb}
\end{split}
\end{equation}
Combining~\eqref{eq:cb} with~\eqref{eq:fromevent1}, we have
\begin{equation} 
\begin{split}
&\textbf{CB}(N_j^0(t)) > \mu_j - \textbf{CB}(N_j^0(t)) - \mu_{i_W} \\
\text{and} \ &\textbf{CB}(N_j^0(t)) > \frac{\Delta_{j,i_W}}{2}.
\end{split}
\end{equation}
Using on the fact that $N_j^0(t) \le t$ and $N_{i,j}(t) \le N_j^0(t)$, we have 
\begin{equation} 
\begin{split}
\frac{\Delta_{j,i_W}}{2} < &\textbf{CB}(N_j^0(t)) \\
=&\sqrt{\frac{2\sigma^2}{N_j^0(t)}\log\frac{\pi^2 K (N_j^0(t))^2}{3\delta}}\\
\le&\sqrt{\frac{2\sigma^2}{N_j^0(t)}\log\frac{\pi^2 K t^2}{3\delta}}\\
\le&\sqrt{\frac{2\sigma^2}{N_{i,j}(t)}\log\frac{\pi^2 K t^2}{3\delta}},
\end{split}
\end{equation}
which is equivalent to
\begin{equation} 
\begin{split}
N_{i,j}(t)<\frac{8\sigma^2}{\Delta_{j,i_W}^2}\log \frac{\pi^2Kt^2}{3\delta}.
\end{split}
\end{equation}

Hence, under event $\mathcal{E}_2$, we have
\begin{equation} 
\begin{split}
 \hat{\mu}_i(t) < &\frac{1}{N_i(t)}\sum_{s \in \tau_i(t)} \mu_{I_s^0} + \sqrt{\frac{2\sigma^2K}{N_i(t)}\log\frac{\pi^2 (N_i(t))^2}{3\delta}}\\
 = &\frac{1}{N_i(t)}\sum_j\sum_{s \in \tau_{i,j}(t)} \mu_{I_s^0} + \sqrt{\frac{2\sigma^2K}{N_i(t)}\log\frac{\pi^2 (N_i(t))^2}{3\delta}}\\
 = &\frac{1}{N_i(t)}\sum_j N_{i,j}(t) \mu_j + \sqrt{\frac{2\sigma^2K}{N_i(t)}\log\frac{\pi^2 (N_i(t))^2}{3\delta}}\\
 = &\sum_j \frac{N_{i,j}(t)}{N_i(t)} (\Delta_{j,i_W}+\mu_{i_W}) + \sqrt{\frac{2\sigma^2K}{N_i(t)}\log\frac{\pi^2 (N_i(t))^2}{3\delta}}\\
 <&\mu_{i_W}+\sqrt{\frac{2\sigma^2K}{N_i(t)}\log\frac{\pi^2 (N_i(t))^2}{3\delta}}\\
    &+\frac{1}{N_i(t)}\sum_{j \ne i_W}\frac{8\sigma^2}{\Delta_{j,i_W}}\log \frac{\pi^2Kt^2}{3\delta}.
\end{split}
\end{equation}
The lemma is proved.
\section{Proof of Theorem~\ref{thm:ab4}} \label{app:thm 1}
By inferring from Lemma~\ref{thm:ab}, we have that with probability $1-\frac{\delta}{K}$, $\forall t > K: |\hat{\mu}_K^0(t) - \mu_K|< \textbf{CB}(N_K^0(t))$.

Because the LCB attack scheme does not attack the target arm, we can also conclude that with probability $1-\frac{\delta}{K}$, $\forall t > K: |\hat{\mu}_K(t) - \mu_K|< \textbf{CB}(N_K(t))$.

The user relies on the UCB algorithm to choose arms. If at round $t$, the user chooses an arm $I_t = i \ne K$, which is not the target arm, we have
\begin{equation} 
\begin{split}
\hat{\mu}_i(t-1)+3\sigma\sqrt{\frac{\log t}{N_i(t-1)}}>\hat{\mu}_K(t-1)+3\sigma\sqrt{\frac{\log t}{N_K(t-1)}},
\end{split}
\end{equation}
which is equivalent to
\begin{equation} 
\begin{split}
3\sigma\sqrt{\frac{\log t}{N_i(t-1)}}>- \hat{\mu}_i(t-1)+\hat{\mu}_K(t-1)+3\sigma\sqrt{\frac{\log t}{N_K(t-1)}}.
\end{split}
\end{equation}
Under event $\mathcal{E}_1$, we have
\begin{equation} 
\begin{split}
\hat{\mu}_K(t) > \mu_K  - \textbf{CB}(N_K(t)).
\end{split}
\end{equation}
Under event $\mathcal{E}_1\cap\mathcal{E}_2 $, according to Lemma~\ref{thm:ab3}, we have
\begin{equation} 
\begin{split}
\hat{\mu}_i(t) \le \mu_{i_W}&+\sqrt{\frac{2\sigma^2K}{N_i(t)}\log\frac{\pi^2 (N_i(t))^2}{3\delta}}\\
    &+\frac{1}{N_i(t)}\sum_{j \ne i_W}\frac{8\sigma^2}{\Delta_{j,i_W}}\log \frac{\pi^2Kt^2}{3\delta}.
\end{split}
\end{equation}
Combing the inequalities above,
\begin{equation} 
\begin{split}
3\sigma\sqrt{\frac{\log t}{N_i(t-1)}}&>-\mu_{i_W}-\sqrt{\frac{2\sigma^2K}{N_i(t-1)}\log\frac{\pi^2 (N_i(t-1))^2}{3\delta}} \\
&-\frac{1}{N_i(t-1)}\sum_{j \ne i_W}\frac{8\sigma^2}{\Delta_{j,i_W}}\log \frac{\pi^2K(t-1)^2}{3\delta}  \\
&+\mu_K - \textbf{CB}(N_K(t-1))+3\sigma\sqrt{\frac{\log t}{N_K(t-1)}}.
\end{split}
\end{equation}
When $t \ge (\frac{\pi^2K}{3\delta})^\frac{2}{5}$,
\begin{equation} 
\begin{split}
3\sigma\sqrt{\frac{\log t}{N_K(t-1)}} &\ge \sqrt{4\sigma^2\frac{\log t}{N_K(t-1)}+5\sigma^2\frac{\log (\frac{\pi^2K}{3\delta})^\frac{2}{5}}{N_K(t-1)}} \\
&\ge \sqrt{2\sigma^2\frac{\log \frac{\pi^2Kt^2}{3\delta}}{N_K(t-1)}} \\
&\ge \sqrt{2\sigma^2\frac{\log \frac{\pi^2K(N_K(t-1))^2}{3\delta}}{N_K(t-1)}} \\
&= \textbf{CB}(N_K(t-1)).
\end{split}
\end{equation}
Now the inequality only depends on $N_i(t-1)$ and some constants:
\begin{equation} 
\begin{split}
3\sigma\sqrt{\frac{\log t}{N_i(t-1)}}>&\Delta_{K,i_W}-\sqrt{\frac{2\sigma^2K}{N_i(t-1)}\log\frac{\pi^2 (N_i(t-1))^2}{3\delta}} \\
&-\frac{1}{N_i(t-1)}\sum_{j \ne i_W}\frac{8\sigma^2}{\Delta_{j,i_W}}\log \frac{\pi^2K(t-1)^2}{3\delta}  \\
>&\Delta_{K,i_W}-\sqrt{\frac{2\sigma^2K}{N_i(t-1)}\log\frac{\pi^2 t^2}{3\delta}} \\
&-\frac{1}{N_i(t-1)}\sum_{j \ne i_W}\frac{8\sigma^2}{\Delta_{j,i_W}}\log \frac{\pi^2Kt^2}{3\delta}.
\end{split}
\end{equation}
By solving the inequality above, we have:
\begin{equation} 
\begin{split}
N_i(t-1)<&\frac{1}{4\Delta_{K,i_W}^2}\left(3\sigma\sqrt{\log t}+\sqrt{2\sigma^2K\log\frac{\pi^2 t^2}{3\delta}}\right.\\
&+\left(\left(3\sigma\sqrt{\log t}+\sqrt{2\sigma^2K\log\frac{\pi^2 t^2}{3\delta}}\right)^2\right.\\
&\left.\left. +4\Delta_{K,i_W}\sum_{j \ne i_W}\frac{8\sigma^2}{\Delta_{j,i_W}}\log \frac{\pi^2Kt^2}{3\delta}\right)^\frac{1}{2}\right)^2. \label{eq:th1eq}
\end{split}
\end{equation}
Since event $\mathcal{E}_1\cap\mathcal{E}_2 $ occurs with probability at least $1-2\delta$, we have that \eqref{eq:th1eq} holds with probability at least $1- 2\delta$. Theorem~\ref{thm:ab4} follows immediately from the definition of the attack cost and \eqref{eq:th1eq}.
\section{Proof of Theorem~\ref{thm:converse}}\label{app:thm2}
Because the target arm is the worst arm, the mean rewards of all arms are larger than or equal to that of the target arm. Thus, for any attack scheme, we have
\begin{equation} 
\frac{1}{N_i(t)}\sum_{s \in \tau_i(t)} \mu_{I_s^0} \ge \mu_K.
\end{equation}

If the user pulls arm $K$ at round $t$, according to UCB algorithm, we have for any arm $i \ne K$,
\begin{equation} 
\begin{split}
\hat{\mu}_i(t-1)+3\sigma\sqrt{\frac{\log t}{N_i(t-1)}}<\hat{\mu}_K(t-1)+3\sigma\sqrt{\frac{\log t}{N_K(t-1)}}.
\end{split}
\end{equation}

Under event $\mathcal{E}_2$, we have Lemma~\ref{thm:equaltoab2} and \eqref{eq:l3} holds for all arm $i$, which implies 
\begin{equation} 
\begin{split}
     \hat{\mu}_i(t-1) > & \frac{1}{N_i(t-1)}\sum_{s \in \tau_i(t-1)} \mu_{I_s^0} - \\ &\sqrt{\frac{2\sigma^2K}{N_i(t-1)}\log\frac{\pi^2 (N_i(t-1))^2}{3\delta}}, 
\end{split}
\end{equation}
and
\begin{equation} 
\begin{split}
\hat{\mu}_K(t-1) < & \frac{1}{N_K(t-1)}\sum_{s \in \tau_K(t-1)} \mu_{I_s^0} + \\ &\sqrt{\frac{2\sigma^2K}{N_K(t-1)}\log\frac{\pi^2 (N_K(t-1))^2}{3\delta}}.
\end{split}
\end{equation}

Noted that for $\delta > \frac{1}{2}$, $\displaystyle h(N) = 2\sigma^2 \frac{N}{K}\log\frac{\pi^2N^2}{3\delta}: \mathbb {R} \to \mathbb {R}$ is monotonically decreasing in $N \ge 1$.

If $N_i(t-1)<\frac{1}{16} N_K(t-1)$ and $N_i(t-1)<\frac{\sqrt{3\delta}}{\pi}t^{\frac{9}{64K}}$ hold for any arm $i$, we have
\begin{equation} 
\begin{split}
3\sigma\sqrt{\frac{\log t}{N_K(t-1)}}<\frac{3}{4}\sigma\sqrt{\frac{\log t}{N_i(t-1)}},
\end{split}
\end{equation}
and
\begin{equation} 
\begin{split}
&\sqrt{\frac{2\sigma^2K}{N_K(t-1)}\log\frac{\pi^2 (N_K(t-1))^2}{3\delta}}\\
<&\sqrt{\frac{2\sigma^2K}{N_i(t-1)}\log\frac{\pi^2 (N_i(t-1))^2}{3\delta}}\\
<&\frac{3}{4}\sigma\sqrt{\frac{\log t}{N_i(t-1)}}.
\end{split}
\end{equation}

Combining the inequalities above, we find 
\begin{equation} 
\begin{split}
\frac{3}{4}\sigma\sqrt{\frac{\log t}{N_i(t-1)}} < \frac{1}{N_K(t-1)}\sum_{s \in \tau_K(t-1)}  \mu_{I_s^0} - \mu_K.
\end{split}
\end{equation}

Since the attack cost is limited in $\mathcal{O}(\log t)$, 
\textbf{\begin{equation} 
\begin{split}
\frac{1}{N_K(t-1)}\sum_{s \in \tau_K(t-1)}  \mu_{I_s^0} - \mu_K = \frac{\mathcal{O}(\log t)}{N_K(t-1)},
\end{split}
\end{equation}}
so
\textbf{\begin{equation} 
\begin{split}
N_i(t-1) = \Omega(\sigma(N_K(t-1))^2).
\end{split}
\end{equation}}

In summary, as long as the event $\mathcal{E}_2$ holds, at least one of the three following equations must be true:
\textbf{\begin{equation} 
\begin{split}
&N_i(t-1) = \Omega(\sigma(N_K(t-1))^2), \\
&N_i(t-1) \ge \frac{1}{16} N_K(t-1), \\
&N_i(t-1) \ge \frac{\sqrt{3\delta}}{\pi}t^{\frac{9}{64K}}.
\end{split}
\end{equation}}
In addition, any one of the three equations shows that the user pulls the non-target arm more than $\mathcal{O}(t^\alpha)$ times, in which $\alpha < \frac{9}{64K}$. Since event $\mathcal{E}_2$ holds with probability at least $1-\delta$, the conclusion in the Theorem holds with probability at least $1-\delta$.

\section{Proof of Lemma~\ref{thm:banditdelta}} \label{app:lemma5}

Note that for $\delta\le \frac{1}{3}$, $ \beta(N)= \sqrt{\frac{2\sigma^2K}{N}\log\frac{\pi^2N^2}{3\delta}}$ is monotonically decreasing in $N$, as
\begin{equation} 
\begin{split}
 \frac{\partial}{\partial N}\beta^2(N) =& \frac{2\sigma^2K}{N^2}\left(2-\log\frac{\pi^2N^2}{3\delta}\right)\\
\le&\frac{2\sigma^2K}{N^2}\left(2-\log\frac{\pi^2}{3\delta}\right) < 0.
\end{split}
\end{equation}

We first prove the first inequality in Lemma~\ref{thm:banditdelta}. Consider the optimal arm $i_O$ and the worst arm $i_W$. Define $C_i:= |\{ t: t\le T,I_t^0\not= I_t=i \}|$. In the action-manipulation setting, when $t>2AK$, MOUCB algorithm has 
\begin{equation} 
\begin{split}
\frac{1}{N_{i_O}(t)}\sum_{s \in \tau_{i_O}(t)} \mu_{I_s^0} &\ge \frac{N_{i_O}(t) - C_{i_O}}{N_{i_O}(t)}\mu_{i_O}+\frac{ C_{i_O}}{N_{i_O}(t)}\mu_{i_W} \\
&= \mu_{i_O}-\Delta_{i_O,i_W}\frac{ C_{i_O}}{N_{i_O}(t)} \\
&\ge \mu_{i_O}-\Delta_{i_O,i_W}\frac{ C_{i_O}}{2A},\label{eq:nio}
\end{split}
\end{equation}
and 
\begin{equation} 
\begin{split}
\frac{1}{N_{i_W}(t)}\sum_{s \in \tau_{i_W}(t)} \mu_{I_s^0} &\le \frac{N_{i_W}(t) - C_{i_W}}{N_{i_W}(t)}\mu_{i_W}+\frac{ C_{i_W}}{N_{i_W}(t)}\mu_{i_O} \\
&= \mu_{i_W}+\Delta_{i_O,i_W}\frac{ C_{i_W}}{N_{i_W}(t)} \\
&\le \mu_{i_W}+\Delta_{i_O,i_W}\frac{ C_{i_W}}{2A}.\label{eq:niw}
\end{split}
\end{equation}

Combining~\eqref{eq:nio} and~\eqref{eq:niw}, we have
\begin{equation} 
\begin{split}
&\frac{1}{N_{i_O}(t)}\sum_{s \in \tau_{i_O}(t)} \mu_{I_s^0}-\frac{1}{N_{i_W}(t)}\sum_{s \in \tau_{i_W}(t)} \mu_{I_s^0} \\
\ge & \mu_{i_O}-\mu_{i_W}-\Delta_{i_O,i_W}\frac{ C_{i_W}}{2A} -\Delta_{i_O,i_W}\frac{ C_{i_O}}{2A} \\
\ge &\mu_{i_O}-\mu_{i_W}-\Delta_{i_O,i_W}\frac{A}{2A} \\
= & \frac{\Delta_{i_O,i_W}}{2}.
\end{split}
\end{equation}

From \eqref{eq:l3}, we could find
\begin{equation} 
\begin{split}
&\frac{1}{N_{i_O}(t)}\sum_{s \in \tau_{i_O}(t)} \mu_{I_s^0}-\frac{1}{N_{i_W}(t)}\sum_{s \in \tau_{i_W}(t)} \mu_{I_s^0} \\
\le & \hat{\mu}_{i_O}(t)+\beta(N_{i_O}(t))-(\hat{\mu}_{i_W}(t)-\beta(N_{i_W}(t))) \\
\le & \max_{i,j}\left\{\hat{\mu}_i(t)+\beta(N_i(t))-\left(\hat{\mu}_j(t)-\beta\left(N_j(t)\right)\right)\right\}.
\end{split}
\end{equation}

We now prove the second inequality in Lemma~\ref{thm:banditdelta}:
\begin{equation} 
\begin{split}
 &\max_{i,j}\left\{\hat{\mu}_i(t)+\beta(N_i(t))-\left(\hat{\mu}_j(t)-\beta(N_j(t))\right)\right\}\\
 \le & \max_{i,j}\left\{\frac{1}{N_i(t)}\sum_{s \in \tau_i(t)} \mu_{I_s^0}+2\beta(N_i(t))\right. \\
& \left. -\left(\frac{1}{N_i(t)}\sum_{s \in \tau_i(t)}\mu_{I_s^0}-2\beta(N_j(t))\right)\right\} \\
 \le &\Delta_{i_O,i_W}+\max_{i,j}\left\{2\beta(N_i(t))+2\beta(N_j(t))\right\}.
\end{split}
\end{equation}

Recall that for $\delta\le \frac{1}{3}$, $ \beta(N)= \sqrt{\frac{2\sigma^2K}{N}\log\frac{\pi^2N^2}{3\delta}}$ is monotonically decreasing in $N$. Therefore, 
\begin{equation} 
\begin{split}
 &\max_{i,j}\left\{2\beta(N_i(t))+2\beta(N_j(t))\right\} \le 4\beta(2A).
\end{split}
\end{equation}

\section{Proof of Theorem~\ref{thm:proposed}} \label{app:thm3}
MOUCB algorithm firstly pulls each arm $2A$ times. Then for $t>2AK$ and under event $\mathcal{E}_2$, if at round $t+1$, MOUCB algorithm choose a non-optimal arm $I_{t+1}= a \not= i_O$, we have
\begin{equation} \nonumber
  \begin{split}
&\hat{\mu}_a+\beta(N_a(t))+\\
&\frac{2A}{N_a(t)} \max_{i,j}\left\{\hat{\mu}_i-\hat{\mu}_j+\beta(N_i(t)) +\beta(N_j(t)) \right\}\\
\ge& \hat{\mu}_{i_O}+\beta(N_{i_O}(t))+\\
&\frac{2A}{N_{i_O}(t)} \max_{i,j}\left\{\hat{\mu}_i-\hat{\mu}_j+\beta(N_i(t)) +\beta(N_j(t)) \right\},
\end{split}
\end{equation}
which implies to 
\begin{equation} \nonumber
  \begin{split}
&\hat{\mu}_a+\frac{A}{N_a(t)}\left( 2\Delta_{i_O,i_W}+8\sqrt{\frac{\sigma^2K}{A}\log\frac{4\pi^2 A^2}{3\delta}}\right)+\beta(N_a(t))\\
\ge& \hat{\mu}_{i_O}+\frac{A}{N_{i_O}(t)}\Delta_{i_O,i_W}+\beta(N_{i_O}(t)),
\end{split}
\end{equation}
according to Lemma~\ref{thm:banditdelta}.

From equation~\eqref{eq:l3}, we could find
\begin{equation} \nonumber
  \begin{split}
\hat{\mu}_a &\le \frac{1}{N_a(t)}\sum_{s \in \tau_a(t)} \mu_{I_s^0} + \beta(N_a(t))\\
& \le \mu_a+\Delta_{i_O,a}\frac{ C_a}{N_a(t)} + \beta(N_a(t))\\
&\le \mu_a+\Delta_{i_O,a}\frac{A}{N_a(t)} + \beta(N_a(t)),
\end{split}
\end{equation}
and
\begin{equation} \nonumber
  \begin{split}
\hat{\mu}_{i_O} &\ge \frac{1}{N_{i_O}(t)}\sum_{s \in \tau_{i_O}(t)} \mu_{I_s^0} - \beta(N_{i_O}(t))\\
& \ge \mu_{i_O}-\Delta_{i_O,i_W}\frac{ C_{i_O}}{N_{i_O}(t)} - \beta(N_{i_O}(t))\\
& \ge \mu_{i_O}-\Delta_{i_O,i_W}\frac{A}{N_{i_O}(t)} - \beta(N_{i_O}(t)).
\end{split}
\end{equation}

By combining the inequalities above, we have
\begin{equation} \nonumber
  \begin{split}
\mu_{i_O} \le& \mu_a+\Delta_{i_O,a}\frac{A}{N_a(t)} + 2\beta(N_a(t))+ \\
&\frac{A}{N_a(t)}\left( 2\Delta_{i_O,i_W}+8\sqrt{\frac{\sigma^2K}{A}\log\frac{4\pi^2 A^2}{3\delta}}\right),
\end{split}
\end{equation}
which is equivalent to 
\begin{equation} \nonumber
  \begin{split}
\Delta_{i_o,a} \le& \Delta_{i_O,a}\frac{A}{N_a(t)} + 2\sqrt{\frac{2\sigma^2K}{N_a(t)}\log\frac{\pi^2 (N_a(t))^2}{3\delta}}+ \\
&\frac{A}{N_a(t)}\left( 2\Delta_{i_O,i_W}+8\sqrt{\frac{\sigma^2K}{A}\log\frac{4\pi^2 A^2}{3\delta}}\right) \\
\le&2\sqrt{\frac{2\sigma^2K}{N_a(t)}\log\frac{\pi^2 t^2}{3\delta}}+\frac{A}{N_a(t)} \left( \Delta_{i_O,a} + \right.\\
&\left. 2\Delta_{i_O,i_W}+8\sqrt{\frac{\sigma^2K}{A}\log\frac{4\pi^2 A^2}{3\delta}}\right).
\end{split}
\end{equation}
Therefore,
\begin{equation} 
\begin{split}
N_a(t)\le \max &\left\{\frac{8\sigma^2K}{\Delta_{i_O,a}^2}\log\frac{\pi^2t^2}{3\delta}, 
\frac{A}{\Delta_{i_o,a}} \left( \Delta_{i_O,a} \right.\right.\\
&\left.\left.+ 2\Delta_{i_O,i_W} + 8\sqrt{\frac{\sigma^2K}{A}\log\frac{4\pi^2 A^2}{3\delta}}\right) \right\}. \label{eq:th3eq}
\end{split}
\end{equation}
As event $\mathcal{E}_2 $ holds with probability at least $1- \delta$, \eqref{eq:th1eq} holds with probability at least $1- \delta$. Then Theorem~\ref{thm:proposed} follows immediately from the definition of the pseudo-regret in~\eqref{eq:pregertdefine} and equation \eqref{eq:th3eq}.

\bibliographystyle{IEEEbib}
\bibliography{mybib}

\end{document}